\documentclass[11pt]{article}

\usepackage{amsmath, amsthm, amssymb, amscd, paralist}
\usepackage{graphicx, color}
\usepackage{bm}
\usepackage[title]{appendix}
\usepackage{hyperref}

\newcommand{\eps}{\varepsilon}

\newcommand{\argmin}{\mathop{\rm arg\min}}

\numberwithin{equation}{section}
\newtheorem{thm}{Theorem}
\newtheorem{lem}{Lemma}
\newtheorem{rem}{Remark}

\newtheorem{defi}{Definition}


\newcommand{\Mult}{\operatorname{Mult}}

\newcommand{\R}{\mathbb{R}}

\newcommand{\ba}{\mathbf{a}}

\newcommand{\mC}{\mathcal{C}}

\newcommand{\mF}{\mathcal{F}}

\newcommand{\mM}{\mathcal{M}}
\newcommand{\mN}{\mathcal{N}}

\renewcommand{\ba}{\mathbf{a}}
\newcommand{\bD}{\mathbf{D}}

\newcommand{\bp}{\mathbf{p}}
\newcommand{\bq}{\mathbf{q}}

\newcommand{\bu}{\mathbf{u}}
\newcommand{\bv}{\mathbf{v}}
\newcommand{\bw}{\mathbf{w}}
\newcommand{\bx}{\mathbf{x}}
\newcommand{\bX}{\mathbf{X}}
\newcommand{\by}{\mathbf{y}}

\newcommand{\bz}{\mathbf{z}}

\newcommand{\balpha}{\bm{\alpha}}

\newcommand{\bgamma}{\bm{\gamma}}

\newcommand{\bell}{\bm{\ell}}

\addtolength{\textwidth}{1in}
\addtolength{\oddsidemargin}{-0.5in}
\addtolength{\textheight}{1in}
\addtolength{\topmargin}{-0.55in}

\parindent 0cm
\addtolength{\parskip}{0.5\baselineskip}


\begin{document}

\title{Deep ReLU network approximation of functions on a manifold}

\author{Johannes Schmidt-Hieber\footnote{University of Twente, Drienerlolaan 5, 7522 NB Enschede, The Netherlands. \newline {\small {\em Email:} \texttt{a.j.schmidt-hieber@utwente.nl}} \newline The research has been supported by the Dutch STAR network and a Vidi grant from the Dutch science organization (NWO).}}

\date{}
\maketitle

\begin{abstract}
Whereas recovery of the manifold from data is a well-studied topic, approximation rates for functions defined on manifolds are less known. In this work, we study a regression problem with inputs on a $d^*$-dimensional manifold that is embedded into a space with potentially much larger ambient dimension. It is shown that sparsely connected deep ReLU networks can approximate a H\"older function with smoothness index $\beta$ up to error $\eps$ using of the order of $\eps^{-d^*/\beta}\log(1/\eps)$ many non-zero network parameters. As an  application, we derive statistical convergence rates for the estimator minimizing the empirical risk over all possible choices of bounded network parameters.  
\end{abstract}

%
%
\paragraph{Keywords:} manifolds; neural networks; ReLU activation function; approximation rates; estimation risk.

\section{Introduction}

Suppose our training data are given by $(\bX_i, Y_i),$ $i=1,\ldots, n,$ with $\bX_i\in \R^d$ the $d$-dimensional input vectors and $Y_i \in \R$ the corresponding real-valued outputs. For many machine learning applications the inputs will lie in a somehow "small" subset compared to $\R^d.$ This is in particular true for image classification problems where the the input consists of vectors of pixel values. Images of cats and dogs for instance are a tiny subset of all images that can be created by arbitrarily assigning each pixel value. Although these subsets are small they cannot be parametrized. A natural way to model the input space, is to assume that the input vectors lie on an unknown $d^*$-dimensional manifold $\mM.$

The objective of deep learning is to reconstruct the relationship between input and output. If there is no noise in the observations, we can assume that there exists an unknown function $f$ with $Y_i=f(\bX_i)$ for all $i.$ Training a network on the dataset $(\bX_i,Y_i)_i$ should then return a deep neural network $g$ that is close to $f.$ In this framework we are not interested in the reconstruction of the manifold $\mM.$

A naive approach would be to approximate the function $f$ on the whole domain. It is, however, known that in order to achieve approximation error $\eps$ for a $\beta$-smooth H\"older function on $\mathbb{R}^d$ a ReLU network with depth $L$ will need $\eps^{-d/\beta}/(L \log(1/\eps))$ many non-zero parameters, see Lemma 1 in \cite{SH2017} for a precise statement. If the function is defined on a $d^*$-dimensional manifold $O(\eps^{-d^*/\beta})$ many non-zero network parameters should be, however, sufficient (up to logarithmic terms).  In this work we give a construction requiring $O(\eps^{-d^*/\beta} \log(1/\eps))$ non-zero parameters. Although the approximation rate is very natural, the proof is involved and requires a notion of smooth local coordinates that is of independent interest. 

The mathematical attempts to understand deep learning started only recently  and are still at their infancy. Nearly all results up to now require either strong assumptions or even avoid key aspects of deep networks such as depth and non-linearity in the parameter map by considering only shallow architectures and/or linearizations.

Deep learning can be viewed as a statistical method for prediction. To describe for which tasks this method performs well and when it fails is one of the key challenges that has to be answered by any sound theory of deep learning. An important aspect of this problem is to study the approximation theory induced by a deep network. To identify settings in which deep neural networks perform well, a good understanding of the approximation theory might be at least as useful as the analysis of algorithmic aspects. 

The most natural concept is to assume smoothness of the target function. Without any additional constraints, smoothness alone will lead to very slow approximation rates in the high dimensional setups for which deep learning still works well. It is therefore important to identify structural constraints for which fast approximation rates can be obtained.

In the statistics literature it has been argued that neural networks perform well if the function that needs to be learned has itself a composition structure, cf. \cite{horowitz2007, bauer2019,Poggio2017, mhaskar2016,SH2017}. Since a deep network can be viewed as a composition of simpler functions, this seems to be in a sense the natural structure that can be learned by this method. And indeed, many of the tasks for which deep neural networks are applied successfully have an underlying composition structure, including image classification, text analysis and game playing. For composition structures, optimal estimation rates can be attained by regressing a possibly large, but sparsely connected deep network to the data. It is also known that wavelet thresholding estimators can only obtain much slower convergence rates (\cite{SH2017}, Section 5). The theory is, however, far from being complete, as many unrealistic assumptions have to be imposed. 

In this work, we continue this line of work by studying an instance where the target function $f$ that has to be learned can be written for some (unknown) invertible map $\gamma$ as 
\begin{align}
	f= (f\circ \gamma^{-1}) \circ \gamma
	\label{eq.main_decompo}
\end{align}
and $f\circ \gamma^{-1}$ is somehow easier to approximate than $f.$ Since this representation has a composition structure, a deep network should be able to adapt to the structure and to learn $\gamma$ and $f\circ \gamma^{-1}$ in the first and last layers, respectively. Compared to a method that learns $f$ directly, faster approximation rates and consequently also faster statistical estimation rates should then be obtainable. In the case of function approximation on a $d^*$-dimensional manifold (ignoring for the moment that there are in general several charts), $\gamma$ is the local coordinate map and if $\gamma^{-1}$ is smooth, $f\circ \gamma^{-1}$ has the same smoothness as $f$ but is defined on $\R^{d^*}$ instead of $\R^d$ making $f\circ \gamma^{-1}$ easier to approximate due to the well known curse of dimensionality.

Another instance for a decomposition of the form \eqref{eq.main_decompo} is the Kolmogorov-Arnold representation. The KA representation has been viewed as a very specific neural network with two hidden layers. Although its connection to neural networks is still dubious, it is often listed among arguments why additional network layers are favorable. We will provide some additional insights in a companion article. 

Approximation theoretic results from the nineties mainly deal with shallow network and results are formulated to hold for large classes of activation functions, see \cite{mhaskar1996} for an example. The proofs typically exploit variations of the fact that if an activation function is smooth in a small neighborhood with non-vanishing derivatives, then all polynomials can be approximated arbitrarily well by a shallow network. Together with bounds on polynomial approximation this lead to approximation error estimates. One can then wonder why one should not work with polynomial approximations directly, see for instance \cite{pinkus1999}, p.177. 

For deep networks, the choice of the activation functions matters and the ReLU (rectified linear unit) activation function $\sigma(x)=\max(x,0)$ has been found to outperform other activation functions for classification tasks in terms of misclassification rates and the computational cost, cf. \cite{glorot2011}. From an approximation theoretic point of view, it makes therefore sense to study approximation rates for specific activation functions.

In this work, we specifically study ReLU networks and we heavily exploit the properties of the ReLU. One of the important features of the ReLU is the projection property $\sigma \circ \sigma=\sigma,$ which means that one can learn skip connections in the network. To illustrate the use of such a skip connection for approximation by deep networks assume that we need to construct several functions simultaneously within one network. In the case of manifold learning, this will be for instance the local coordinate maps and the functions defining a partition of unity. If for some of these functions we need networks with $L_1$ hidden layers and for the other functions $L_2<L_1$ hidden layers are required. By adding identity maps and using that $\sigma \circ \sigma=\sigma,$ we can then squeeze in additional hidden layers such that all functions can be simultaneously computed by a neural network with $L_1$ hidden layers. For more precise statements, see also \eqref{eq.add_layers} and \eqref{eq.parallelization}. One should also observe that for a general continuous activation function the approximation of the identity is difficult, see Proposition 2.9 in \cite{PetersenEtAl2018}. 

Moreover, we use other ReLU  specific network constructions in order to approximate for instance the multiplication of two inputs, see \cite{telgarsky2016, liang2016, Yarotsky2018}. Existing approximation theoretic results for general activation functions require that the size of the network parameters increases as the approximation error decreases. In practice, however, the network parameters are randomly initialized by small numbers and the trained network weights are typically close to the initialized ones and therefore do not become large. As we consider ReLU networks, we are able to show that good approximation rates are achievable even if all network parameters are bounded in absolute value by one. 

Literature on the related problem of reconstructing the manifold from samples includes \cite{MR2460286, MR2238670, MR2203281, 2016arXiv160204723B}. For function approximation on manifolds with networks, \cite{MR2647012} gives an approximation rate using so called Eignets and  \cite{chui2018} provides a survey of the field and proposes a variation of \eqref{eq.main_decompo} for function approximation on manifolds. 

For ReLU networks, approximation rates and statistical risk bounds have been obtained for multivariate function approximation under smoothnes constraints \cite{kohler2005, hamers2006, kohler2011, 2018arXiv181008033S, Hayakawa2019} and under structural constraints, including compositions of functions \cite{kohler2017, bauer2019} and piecewise smooth functions \cite{PETERSEN2018296, Imaizumi2019}. \cite{ECKLE2019232} compares deep ReLU networks and multivariate adaptive regression splines (MARS). While finishing the article, we became aware of the very recently released work by Nakada and Imaizumi \cite{2019arXiv190702177N}. In this article, approximation rates and statistical risk bounds are derived depending on the Minkowski dimension of the domain. While our approach is more inspired by the idea to rewrite the problem as a composition of function, Nakada and Imaizumi use a different proving strategy. In Remark \ref{rem.rem}, we describe an example where the Minkowski dimension is equal to the ambient dimension but still faster rates can be obtained using the approach described in this article. Another difference is that in our approach all network weights are bounded in absolute value by one, which, as mentioned above, is more in line with practice. 

The article is structured as follows. In Section \ref{sec.ReLU}, we define deep ReLU network function classes and recall important embedding properties. The network approximation of functions on manifolds is considered in Section \ref{sec.manifold}. This section also contains the definition of manifolds with smooth local coordinate maps and the main approximation error bound. An application to prediction error bounds for the empirical risk minimizer over sparsely connected deep ReLU networks can be found in Section \ref{sec.statistical_bound}. Longer proofs and additional technical lemmas are deferred to the appendix. 

{\it Notation:} For a vector $\bx=(x_1, \ldots, x_d),$ $|\bx|_p := (\sum_{i=1}^d |x_i|^p)^{1/p},$ $|\bx |_\infty := \max_i |x_i|,$ and $|\bx|_0 := \sum_i \mathbf{1}(x_i \neq 0).$ For a vector valued function $f$ defined on the domain $D,$ we set $\|f\|_{L^\infty(D)}:=\max_{\bx \in D}|f(\bx)|_\infty.$

\section{Deep feedforward neural networks}
\label{sec.ReLU}

Feedforward means that the information is passed in one direction through the network. We can either write a network function via a recursion or via, what is sometimes called, an unfolded representation. For our purposes the unfolded representation turns out to be more convenient and we follow the notation in \cite{SH2017}. Throughout the article, we work with ReLU networks, which means that the activation function $\sigma$ is taken to be $\sigma(x)=\max(x,0)=(x)_+.$ For vectors $\bv=(v_1, \ldots, v_r)^\top, \by=(y_1, \ldots, y_r)^\top\in \mathbb{R}^r,$ the shifted activation function is defined as $\sigma_{\bv} = (\sigma(y_1-v_1), \ldots, \sigma(y_r-v_r))^\top : \mathbb{R}^r \rightarrow \mathbb{R}^r.$ We also define a separate output activation function $\rho: \mathbb{R}^r \rightarrow \mathbb{R}^s$ that is chosen in dependence on the statistical problem. For regression, $\rho$ is the identity. For classification the softmax 
\begin{align*}
	\rho(\bz) = \Big(\frac{e^{z_1}}{\sum_{j=1}^r e^{z_j}}, \ldots, \frac{e^{z_r}}{\sum_{j=1}^r e^{z_j}} \Big)^\top
\end{align*}
is used mapping $\bz$ to a probability vector. The network architecture $(L, \bp)$ consists of a positive integer $L$ called the {\it number of hidden layers} or {\it depth} and a {\it width vector} $\bp=(p_0, \ldots, p_{L+1}) \in \mathbb{N}^{L+2}.$ A neural network with network architecture $(L, \bp)$ is then any function of the form 
\begin{align}
	f: \mathbb{R}^{p_0} \rightarrow \mathbb{R}^{p_{L+1}}, \quad \bx \mapsto f(\bx) = \rho W_L  \sigma_{\bv_L}   W_{L-1}  \sigma_{\bv_{L-1}}  \cdots  W_1 \sigma_{\bv_1}  W_0\bx,
	\label{eq.NN}
\end{align}
where $W_i$ is a $p_i \times p_{i+1}$ weight matrix and $\bv_i \in \mathbb{R}^{p_i}$ is a shift vector. Given a  function $g$ and a network architecture $(L, \bp),$ the approximation problem is to construct a network function of the form \eqref{eq.NN} with small approximation error. This means that $L, \bp$ are fixed and the adjustable parameters are the entries of the matrices $W_0, \ldots, W_L$ and the shift vectors $v_1, \ldots, v_L.$
 
Let $\|W_j\|_0$ denote the number of non-zero entries of $W_j$ and $\|W_j\|_\infty$ the maximum-entry norm of $W_j.$ The $s$-sparse networks with network parameters all bounded in absolute value by one are
\begin{align}
	\mF(L, \bp, s)
	&:= \Big\{f \, \text{of the form \eqref{eq.NN}} \, : \max_{j = 0, \ldots, L} \|W_j\|_{\infty} \vee |\bv_j|_\infty \leq 1, \,  \sum_{j=0}^L \|W_j\|_0 + |\bv_j|_0 \leq s\Big\},
	\label{eq.defi_bd_sparse_para_space}
\end{align}
with the convention that $\bv_0$ is a vector with coefficients all equal to zero. For fully connected networks, we  omit $s$ and write $\mF(L, \bp).$ As all the networks that we consider in this work have the same width for all hidden layers and the widths of the hidden layers are most of the time complicated expressions, it is convenient to introduce
\begin{align*}
	\mF(L, (p_0 \sim p \sim p_{L+1}), s) := \mF(L, (p_0,\underbrace{p, \ldots, p}_{L \ \text{times}},p_{L+1}, s).
\end{align*}

We frequently make use of the fact that for a fully connected network in $\mF(L, \bp),$ there are $\sum_{\ell=0}^L p_\ell p_{\ell+1}$ weight matrix parameters and $\sum_{\ell=1}^{L} p_\ell$ network parameters coming from the shift vectors. The total number of parameters is thus
\begin{align}
	\sum_{\ell=0}^L (p_\ell +1) p_{\ell+1} -p_{L+1}.
	\label{eq.nr_of_param_in_net}
\end{align}

To prove approximation error bounds, the general proof strategy is to build first smaller networks and then combine them into one big network. To combine networks, we make frequently use of the following rules. Firstly, network function spaces are enlarged by increasing the width vector and the number of non-zero network entries, 
\begin{align}
	\mF(L, \bp, s) \subseteq \mF(L, \bq, s') \quad \text{whenever} \ \bp \leq \bq \ \text{componentwise and} \ s\leq s'.
\end{align}
We can also compose two networks if the number of units in the output layer of the first network matches the number of units in the input layer of the second network. More concretely, for $\bp = (p_0, \ldots,  p_{L+1})$ and $\bp' = (p_0',  \ldots, p_{L+1}'),$ 
\begin{align}
\begin{split}
	f \in &\mF(L, \bp), \ g \in \mF(L',\bp'), \ p_{L+1} =p_0', \  \bv \in [-1,1]^{p_{L+1}}, \\ 
	&\Rightarrow \ \ g \circ \sigma_{\bv}(f) \in \mF(L+L'+1, (\bp, p_1', \ldots, p_{L'+1}')).
\end{split}	
	\label{eq.composition_general}
\end{align}
To synchronize the number of hidden layers for two networks, we can squeeze in add additional layers with identity weight matrix. Adding the extra hidden layers at the bottom of the network yields the inclusion
\begin{align}
	\mF(L, \bp,s) \subset \mF(L+q, (\underbrace{p_0,\ldots,p_0}_{q\text{ times}} , \bp), s+qp_0).
	\label{eq.add_layers}
\end{align}
Moreover, two networks of the same depth can be combined in order to compute two network functions in parallel,
\begin{align}
\begin{split}
	f \in &\mF(L, \bp ,s ), \ g \in \mF(L, \bp', s'), \ p_0=p_0' \\ 
	&\Rightarrow \ \  (f,g) \in \mF(L, (p_0, p_1+p_1', \ldots, p_{L+1}+p_{L+1}'), s+s').
\end{split}	
	\label{eq.parallelization}
\end{align}
Finally, for sparse networks having more than $s$ units in one hidden layer does not add anything to the function class and
\begin{align}
	\mF(L, \bp, s)=\mF\big(L, (p_0, p_1\wedge s, p_2\wedge s,  \ldots, p_L\wedge s, p_{L+1}), s\big).
	\label{eq.removal_nodes_identity}	
\end{align}
A proof of this fact is given in \cite{SH2017}.

\section{ReLU network approximation of a function on a manifold}
\label{sec.manifold}

As a prerequisite, we need to define H\"older functions on manifolds. Based on this definition, we can then introduce compact manifolds with H\"older smooth local coordinate charts and derive several properties such as existence of a partition of unity. The main approximation error bound is stated in Theorem \ref{thm.main_approx} at the end of the section.

{\bf H\"older functions and smoothness on a manifold:} For an index $\beta \leq 1,$ a function $f: D\rightarrow R^q,$ with $D$ an open set in $\R^r,$ has H\"older smoothness index $\beta,$ if $|f(\bx)-f(\by)|\leq C |\bx-\by|^\beta.$ Because of the equivalence of norms on finite dimensional vectors spaces, $|\cdot|$ can be any norm. H\"older continuity can be extended to $\beta >1.$ Let $\lfloor \beta \rfloor$ denote the largest integer strictly smaller than $\beta.$ For a real-valued function, the ball of $\beta$-H\"older functions with radius $K$ is then defined as
\begin{align*}
	\mC_r^\beta(D, K) = \Big\{ 
	&f:D \subset \R^r \rightarrow \R : \\
	&\sum_{\balpha : |\balpha| < \beta}\|\partial^{\balpha} f\|_{L^\infty(D)} + \sum_{\balpha : |\balpha |= \lfloor \beta \rfloor } \, \sup_{\stackrel{\bx, \by \in D}{\bx \neq \by}}
	\frac{|\partial^{\balpha} f(\bx) - \partial^{\balpha} f(\by)|}{|\bx-\by|_\infty^{\beta-\lfloor \beta \rfloor}} \leq K
	\Big\},
\end{align*}
where we used multi-index notation, that is, $\partial^{\balpha}= \partial^{\alpha_1}\ldots \partial^{\alpha_r}$ with $\balpha =(\alpha_1, \ldots, \alpha_r)\in \mathbb{N}^r$ and $|\balpha| :=|\balpha|_1.$ For a vector valued function $f,$ we write $f\in \mC_r^\beta(D, K)$ if all the component functions are in $\mC_r^\beta(D, K).$ This space is sometimes also denoted by $\mC^{\beta-\lfloor \beta\rfloor,\lfloor \beta\rfloor}$, cf. \cite{evans2010}.

For two vectors $\bx\in \R^r$ and $\balpha\in \mathbb{N}^r$ write $\bx^{\balpha}:= x_1^{\alpha_1}\cdot \ldots \cdot x_r^{\alpha_r}$ and $\balpha ! := \alpha_1 ! \cdot \ldots \cdot \alpha_r!.$ Define $P_{\ba}^{\beta} f(\bx) = \sum_{\alpha : |\alpha| < \beta} (\partial^{\balpha} f )(\ba)(\bx-\ba)^{\balpha}/\balpha ! .$ If $D$ is also convex and $\ba \in D,$ then, by Taylor's theorem for multivariate functions, there exists $\xi \in [0,1],$ such that
\begin{align*}
	f(\bx)= P_{\ba}^{\beta} f(\bx)
	+ 
	\sum_{\beta - 1 \leq |\balpha | < \beta } \Big((\partial^{\balpha} f )(\ba+\xi (\bx-\ba))
	- (\partial^{\balpha} f )(\ba) \Big)  \frac{(\bx-\ba)^{\balpha}}{\balpha !},
\end{align*}
and so, for $f\in \mC_r^\beta(D,K),$
\begin{align}
	\begin{split}
	\big | f(\bx) - P_{\ba}^{\beta} f(\bx)  \big|
	&=
	\sum_{\beta-1 \leq |\balpha |< \beta} \frac{|(\bx-\ba)^{\balpha} |}{\balpha !}  
	 \big| (\partial^{\balpha} f )(\ba+\xi (\bx-\ba)) - (\partial^{\balpha} f )(\ba) \big|  \\
	&\leq 
	K |\bx-\ba|_\infty^{\beta}.
	\end{split}
	\label{eq.Taylor_approx}
\end{align}
This means that a $\beta$-H\"older function can be approximated in the neighborhood of any point $\ba$ by a polynomial of order $\lfloor \beta \rfloor$ up to an approximation error $O(|\bx-\ba|_\infty^{\beta}).$ Via this property, we can define H\"older functions on any subset of a metric space. To denote the difference, we use $C$ instead of the calligraphic $\mC$ and write
\begin{align*}
	C_r^\beta(D, K) := \Big\{ 
	&f:D \subset \R^r \rightarrow \R : \ \text{for any} \ \ba \in D, \ \exists  \ P_{\ba}^{\beta} f(\bx) = \sum_{0 \leq |\bgamma| < \beta} c_{\bgamma, \ba} \bx^{\bgamma}  \\ &\text{with} \ \big | f(\bx) - P_{\ba}^{\beta} f(\bx)  \big|\leq K |\bx-\ba|_\infty^{\beta}, \ \forall \bx \in D,  \ \text{and} \ \sup_{\ba \in D, \ 0\leq \bgamma < \beta} \bgamma ! |c_{\bgamma, \ba}|\leq K 
	\Big\}.
\end{align*}
As before, for vector valued functions, $f\in C_r^\beta(D, K)$ means that all component functions are in this space. We also define $$C_r^\beta(D) := \cup_{K>0} C_r^\beta(D,K).$$ If $D$ is a bounded domain, it is not hard to show that $C_r^\beta(D)\subseteq C_r^{\beta'}(D)$ whenever $\beta'\leq \beta.$ If $f\in C_r^1(D,K),$ then, $P_{\ba}^{\beta} f =f(\ba)$ and thus $f$ is Lipschitz with Lipschitz constant bounded by $K.$ Together with the embedding property, this shows that $f\in C_r^\beta(D)$ is Lipschitz if $\beta\geq 1$ and $D$ is bounded. 

\begin{lem}
\label{lem.smoothness_implies_Hoelder}
Let $D\subseteq \R^r$ be an open set and consider a bounded function $f: D \rightarrow \R.$ Suppose that all partial derivatives of $f$ exist, are bounded and vanish outside of a bounded set. Then, $f \in C_r^\beta(D)$ for all $\beta>0.$
\end{lem}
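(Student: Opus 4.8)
The plan is to take $P_\ba^\beta f$ to be the Taylor polynomial of $f$ at $\ba$ of degree $\lfloor\beta\rfloor$, that is $P_\ba^\beta f(\bx)=\sum_{|\bm{\delta}|\le\lfloor\beta\rfloor}\partial^{\bm{\delta}}f(\ba)(\bx-\ba)^{\bm{\delta}}/\bm{\delta}!$, and to check the two requirements in the definition of $C_r^\beta(D,K)$ for a suitable radius $K$. Fix $\beta>0$, write $Q_R=[-R,R]^r$ for a box outside of which all partial derivatives of order $\ge 1$ vanish, and let $M$ be a common bound for $f$ and for all its partial derivatives up to order $\lfloor\beta\rfloor+1$ (which exist by assumption since the lemma must hold for every $\beta$).

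First I would verify the coefficient bound. Re-expanding the Taylor polynomial into monomials $\bx^{\bgamma}$ produces coefficients $c_{\bgamma,\ba}$ that are finite linear combinations of the numbers $\partial^{\bm{\delta}}f(\ba)$ with $|\bm{\delta}|\le\lfloor\beta\rfloor$, weighted by binomial factors and by powers $\ba^{\bm{\delta}-\bgamma}$. For $\ba\in Q_R$ these are bounded by a constant depending only on $r,\beta,R,M$, since both $\ba$ and the derivatives stay bounded there. For $\ba\notin Q_R$ all derivatives of order $\ge 1$ vanish, so the Taylor polynomial collapses to the constant $f(\ba)$; thus $c_{\bzero,\ba}=f(\ba)$ and $c_{\bgamma,\ba}=0$ for $\bgamma\ne\bzero$, and boundedness of $f$ controls these by $M$. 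This is precisely where the hypothesis that the derivatives vanish off a bounded set is essential: it prevents the factors $\ba^{\bm{\delta}-\bgamma}$ from blowing up as $|\ba|\to\infty$. Enlarging $K$ beyond both bounds gives $\sup_{\ba\in D,\,|\bgamma|<\beta}\bgamma!\,|c_{\bgamma,\ba}|\le K$.

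Next I would establish the error estimate $|f(\bx)-P_\ba^\beta f(\bx)|\le K|\bx-\ba|_\infty^\beta$ by splitting on the size of $|\bx-\ba|_\infty$. When $|\bx-\ba|_\infty>1$ no derivative information along a segment is needed: bounding $f$ by $M$ and the polynomial by $\sum_{|\bm{\delta}|\le\lfloor\beta\rfloor}M|\bx-\ba|_\infty^{|\bm{\delta}|}/\bm{\delta}!\le C\,M|\bx-\ba|_\infty^{\lfloor\beta\rfloor}$, and invoking $\lfloor\beta\rfloor<\beta$ together with $|\bx-\ba|_\infty>1$, yields a bound of the desired form. When $|\bx-\ba|_\infty\le 1$ I would instead apply Taylor's theorem with Lagrange remainder of order $\lfloor\beta\rfloor+1$ as in \eqref{eq.Taylor_approx}: the remainder is controlled by $C\,M|\bx-\ba|_\infty^{\lfloor\beta\rfloor+1}$, and since $\lfloor\beta\rfloor+1\ge\beta$ and $|\bx-\ba|_\infty\le 1$ this is at most $C\,M|\bx-\ba|_\infty^\beta$. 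Combining the two regimes and enlarging $K$ a final time completes the estimate.

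The hard part will be the near regime, where Taylor's theorem requires the segment $[\ba,\bx]$ to lie in $D$; for a general open, possibly non-convex $D$ this need not hold. The natural remedy is to pass to a bounded extension $\tilde f$ of $f$ to all of $\R^r$ whose partial derivatives up to order $\lfloor\beta\rfloor+1$ remain bounded, which is available under the hypotheses because $f$ and all its derivatives are bounded and those derivatives are supported in $Q_R$, and then to run the remainder estimate for $\tilde f$ along the now unobstructed segment. Since $\tilde f=f$ on $D$ and only values at points of $D$ enter the polynomial (through $\ba\in D$) and the error bound (through $\bx\in D$), the extension affects neither requirement. I expect this extension/convexity step to be the genuine obstacle: it is vacuous when $D$ is convex, and in the general case it is exactly where boundedness of the derivatives and their compact support must be used together, so I would treat it carefully rather than appeal to convexity.
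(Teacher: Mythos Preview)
Your approach coincides with the paper's: define $P_{\ba}^\beta f$ as the Taylor polynomial, re-expand it into monomials $\bx^{\bgamma}$ to read off the coefficients $c_{\bgamma,\ba}$, and control the remainder via \eqref{eq.Taylor_approx}. The paper's proof is a two-line sketch that writes down the monomial expansion
\[
P_{\ba}^{\beta} f(\bx)=\sum_{0 \leq |\bgamma| < \beta} \bx^{\bgamma} \sum_{\bgamma \leq \balpha,\ |\balpha|< \beta} (\partial^{\balpha} f) (\ba)\, \frac{(- \ba)^{\balpha-\bgamma}}{\bgamma !\,(\balpha -\bgamma)!}
\]
and then simply cites \eqref{eq.Taylor_approx}; your write-up is strictly more careful, in particular about why the compact support of the derivatives is needed to bound $c_{\bgamma,\ba}$ uniformly in $\ba$.

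You are right to flag the convexity hypothesis hidden in \eqref{eq.Taylor_approx}, but your proposed remedy via a smooth bounded extension $\tilde f$ cannot succeed, because the lemma as stated is actually \emph{false} for general open $D$. Take $r=1$, $D=(-1,0)\cup(0,1)$, and $f=\mathbf{1}_{(0,1)}$. Then $f$ is bounded and every derivative exists on $D$ and vanishes identically, so all hypotheses hold. Yet $f\notin C_1^\beta(D)$ for any $\beta>0$: fix $K$ and pick $\ba=-\eps$, $\bx=\eps$ with $\eps$ small. Any admissible polynomial $P=P_{\ba}^\beta f$ has coefficients bounded by $K$, hence is Lipschitz on $[-1,1]$ with a constant $L_K$ depending only on $K,\beta$. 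Testing the defining inequality at $\bx'=-\eps/2\in D$ gives $|P(-\eps/2)|\le K(\eps/2)^\beta$, so $|P(\eps)|\le K(\eps/2)^\beta+L_K\cdot\tfrac{3}{2}\eps$, whereas testing at $\bx=\eps$ requires $|1-P(\eps)|\le K(2\eps)^\beta$. For $\eps$ small these are incompatible. In particular no extension argument can rescue the general case.

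The paper glosses over this; its only use of the lemma is with $D=\R^2$, which is convex, so \eqref{eq.Taylor_approx} applies directly and your near/far split together with the order-$(\lfloor\beta\rfloor+1)$ Lagrange remainder goes through without any extension. I would simply add the hypothesis that $D$ is convex (or just state the result for $D=\R^r$); then your argument is complete and matches the paper's intended proof.
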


\begin{proof}
Rewriting
\begin{align*}
	P_{\ba}^{\beta} f(\bx) 
	= \sum_{\alpha : |\alpha| < \beta} \frac{(\partial^{\balpha} f )(\ba)(\bx-\ba)^{\balpha}}{\balpha !}
	=
	\sum_{0 \leq |\bgamma| < \beta} \bx^{\bgamma} \sum_{\bgamma \leq \balpha \&  |\balpha|< \beta} (\partial^{\balpha} f) (\ba) \frac{(- \ba)^{\balpha-\bgamma}}{\bgamma ! (\balpha -\bgamma)!},
\end{align*}
the result follows from \eqref{eq.Taylor_approx}. 
\end{proof}


\begin{lem}
\label{lem.composition}
Let $D\subset \R^r$ be a bounded set and $\beta\geq 1.$ If $f \in C_r^\beta(D)$ is a function mapping to $\R^q$ and $g\in C_q^\beta(\R^q),$ then, $g\circ f \in C_r^\beta(D).$ 
\end{lem}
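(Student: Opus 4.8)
The plan is to construct, for each $\ba \in D$, an explicit local polynomial approximating $h := g \circ f$ by composing the local polynomials of $g$ and $f$ and discarding the high-order terms. Write $\bb := f(\ba)$, let $P := P_\ba^\beta f$ be the vector-valued local polynomial of $f$ at $\ba$ and $Q := P_\bb^\beta g$ the local polynomial of $g$ at $\bb$, as furnished by the definitions of $C_r^\beta(D)$ and $C_q^\beta(\R^q)$. Evaluating the bound $|f(\bx) - P(\bx)|_\infty \le K|\bx-\ba|_\infty^\beta$ at $\bx = \ba$ shows $P(\ba) = \bb$, so, re-expanding $P$ in powers of $(\bx - \ba)$ and $Q$ in powers of $(\by - \bb)$, the polynomial $Q(P(\bx)) = \sum_{|\bm{\delta}| < \beta} \tilde d_{\bm{\delta},\bb}\,(P(\bx) - \bb)^{\bm{\delta}}$ is a polynomial in $(\bx - \ba)$ in which each component of $P - \bb$ vanishes at $\ba$, so the term of multidegree $\bm{\delta}$ contributes only monomials of degree $\ge |\bm{\delta}|$. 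I would then \emph{define} $P_\ba^\beta h$ to be the truncation $T$ of $Q(P(\cdot))$ that keeps only the monomials in $(\bx-\ba)$ of degree $< \beta$; since the degree in $(\bx - \ba)$ equals the degree in $\bx$, $T$ has degree $< \beta$ in $\bx$, as required.

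For the approximation bound I would split
$$ h(\bx) - T(\bx) = \big(g(f(\bx)) - Q(f(\bx))\big) + \big(Q(f(\bx)) - Q(P(\bx))\big) + \big(Q(P(\bx)) - T(\bx)\big) $$
and bound each piece by a constant multiple of $|\bx - \ba|_\infty^\beta$. The first piece is $\le K|f(\bx) - f(\ba)|_\infty^\beta$ by the defining property of $Q$; here I use that $\beta \ge 1$ together with $D$ bounded, so that, by the remark preceding the lemma, $f$ is Lipschitz, giving $|f(\bx)-f(\ba)|_\infty^\beta \le L^\beta |\bx - \ba|_\infty^\beta$. The second piece is controlled by the Lipschitz continuity of the polynomial $Q$ on the bounded set containing $f(D)$ and $P(D)$ — both are bounded since $f$ is bounded and $|P(\bx) - f(\bx)|_\infty \le K\operatorname{diam}(D)^\beta$ — so it is $\le L_Q |f(\bx) - P(\bx)|_\infty \le L_Q K |\bx-\ba|_\infty^\beta$, with $L_Q$ uniform in $\ba$ because the coefficients of $Q$ are uniformly bounded.

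The third piece consists precisely of the discarded monomials, each of multidegree $\bm{\mu}$ with $|\bm{\mu}| \ge \beta$; for $\bx \in D$ one has $|(\bx-\ba)^{\bm{\mu}}| \le \operatorname{diam}(D)^{|\bm{\mu}| - \beta} |\bx - \ba|_\infty^\beta$, and as only finitely many multi-indices occur and their coefficients are bounded, this piece too is $O(|\bx-\ba|_\infty^\beta)$. It then remains to verify the uniform coefficient bound $\sup_{\ba,\bgamma} \bgamma!\,|c_{\bgamma,\ba}| \le K'$. This follows by tracking constants: the coefficients of $Q$ in powers of $(\by - \bb)$ and of $P$ in powers of $(\bx - \ba)$ are uniformly bounded (converting from the powers-of-$\by$, resp.\ powers-of-$\bx$, bounds costs only bounded factors, since $\bb \in f(D)$ and $\ba \in D$ are bounded), the finitely many products forming the coefficients of $Q(P(\cdot))$ in powers of $(\bx-\ba)$ are therefore bounded, and re-expanding $T$ into powers of $\bx$ again costs only bounded factors. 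Collecting the three error bounds with the coefficient bound yields $h \in C_r^\beta(D, K')$ for a suitable $K'$.

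The main obstacle is not any single estimate but the bookkeeping that keeps all constants uniform in $\ba$ simultaneously: one must center the two polynomials (at $\ba$ and at $\bb = f(\ba)$) so that the composition inherits the correct degree filtration, and then pass back and forth between centered and monomial representations while checking that every conversion factor stays bounded as $\ba$ ranges over $D$. The role of the hypothesis $\beta \ge 1$ is isolated to the first error term, where Lipschitz (rather than merely H\"older) continuity of $f$ is what prevents the exponent $\beta$ from degrading to $\beta^2$.
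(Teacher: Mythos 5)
Your proposal is correct and follows essentially the same route as the paper's proof: compose the two local polynomials, truncate the terms of degree $\geq \beta$ (controlling the discarded monomials via the boundedness of $D$), and bound the error by the same three-way split, using the Lipschitz continuity of $Q$ on a bounded set and the Lipschitz continuity of $f$ (where $\beta \geq 1$ enters, exactly as you isolate it). Your extra bookkeeping on re-centering the polynomials and tracking the coefficient bounds uniformly in $\ba$ is a welcome elaboration of steps the paper leaves implicit.
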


The proof is given in the appendix.

{\bf Manifolds with smooth local coordinates:} We consider a $d^*$-dimensional compact manifold $\mM.$ By definition of a compact manifold, there exist open sets $V_1, \ldots, V_r\subset \mM$ with $\cup_j V_j =\mM$ and coordinate maps $\psi_j: \R^{d^*} \rightarrow V_j,$ $j=1, \ldots, r.$ A smooth manifold only guarantees that the transfer functions are smooth. It does not say anything about the smoothness of a local coordinate map or its inverse. We will therefore impose additional structure here.

\begin{defi}
\label{defi.smooth_maps}
We say that a compact $d^*$-dimensional manifold $\mM\subset \R^d$ has smooth local coordinates if there exist charts $(V_1,\psi_1), \ldots,(V_r,\psi_r),$ such that for any $\gamma>0,$ $\psi_j \in C_d^\gamma(V_j)$ and $\psi_j^{-1}\in C_{d^*}^\gamma(\psi_j(V_j))$ for all $j=1, \ldots, r.$
\end{defi}

As an example consider the unit sphere $S^1 \subset \R^2.$ Every point on the sphere can be written as $(\sin t, \cos t),$ $0\leq t < 2\pi.$ Defining the charts such that $(x,y) \mapsto \arcsin(x)$ and $(x,y) \mapsto \arccos(y)$ are invertible and smooth on the covering, it can be shown that $S^1$ is a manifold with smooth local coordinates in the sense of Definition \ref{defi.smooth_maps}.

The notion of smooth local coordinates is also compatible with the standard operations to construct more complicated manifolds from simpler ones. If, for instance, $\mM$ and $\mN$ are compact manifolds with dimension $d_1^*$ and $d_2^*,$ respectively and smooth local coordinates, then, $\mM \times \mN$  is a compact $d_1^*+d_2^*$-dimensional manifold with smooth local coordinates. This can be checked as follows. If  $(V_1,\psi_1), \ldots,(V_r,\psi_r)$ are the charts for $\mM$ and  $(W_1,\phi_1), \ldots,(W_r,\phi_s)$ the charts for $\mN,$ then, one can directly verify the conditions for the charts $(V_k \times W_\ell, (\bx, \by) \mapsto (\psi_k(\bx), \phi_\ell(\by))^\top),$ $k=1, \ldots, r, $ $\ell=1, \ldots, s.$

{\bf Partition of unity:} On a compact manifold $\mM,$ it is always possible to find a finite partition of unity, see Section 13.3 in \cite{Tu2011}. As we require additional smoothness and support properties, a more refined result is needed. This leads to several complications in the proof which is deferred to the appendix.

For a vector $\bx$ and a set $A$ on the same vector space, $|\bx-A|_\infty:=\inf_{\ba \in A} |\bx-\ba|_\infty.$ For $\delta >0,$ define
\begin{align*}
	V_j^{-\delta} :=\{\by \in V_j: |\by -(\mM\setminus V_j)|_\infty \geq \delta\}.
\end{align*}

\begin{lem}
\label{lem.partition_of_unity}
Consider a compact $d^*$-dimensional manifold $\mM\subset \R^d$ with smooth local coordinates. Then, there exist a $\underline \delta >0$ and non-negative functions $\tau_j:\mM \rightarrow \R,$ $j=1, \ldots,r,$ such that for any $\gamma>0,$ and any  $\bx \in \mM,$ $\{\by\in \mM: \tau_j(\by)>0\}\subseteq V_j^{-\underline{\delta}},$  $\tau_j \in C_d^\gamma(\mM),$ and $\sum_{j=1}^r \tau_j(\bx) =1.$
\end{lem}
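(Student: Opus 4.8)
The plan is to build, for each chart, a smooth bump that is supported well inside $V_j$, and then to normalize these bumps by their sum. Throughout I use that $C^\infty$ functions with bounded derivatives and compact support are Hölder of every order (Lemma \ref{lem.smoothness_implies_Hoelder}) and that Hölder regularity is preserved under composition (Lemma \ref{lem.composition}).

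First I would produce a single $\underline\delta>0$ for which the shrunk charts still cover $\mM$. The function $g(\bx):=\max_{j}|\bx-(\mM\setminus V_j)|_\infty$ is continuous on $\mM$, since each $|\bx-(\mM\setminus V_j)|_\infty$ is $1$-Lipschitz, and it is strictly positive: every $\bx$ lies in some $V_j$, which is open in $\mM$, so $\mM\setminus V_j$ is closed and does not contain $\bx$, whence $|\bx-(\mM\setminus V_j)|_\infty>0$. As $\mM$ is compact, $g$ attains a positive minimum $c>0$; set $\underline\delta:=c/2$. Then for each $\bx\in\mM$ some $j$ satisfies $\bx\in V_j^{-2\underline\delta}$, so $\bigcup_j V_j^{-2\underline\delta}=\mM$. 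Next, fixing $j$, I would work in coordinates through $\psi_j^{-1}$, which by Definition \ref{defi.smooth_maps} is Hölder of every order. The set $\psi_j^{-1}\big(\{\by\in V_j:|\by-(\mM\setminus V_j)|_\infty>\underline\delta\}\big)$ is open in $\R^{d^*}$ and contains the compact set $C_j:=\psi_j^{-1}(V_j^{-2\underline\delta})$, so the smooth Urysohn lemma supplies $h_j\in C^\infty(\R^{d^*})$ with $0\le h_j\le1$, $h_j\equiv1$ on $C_j$, and compact support inside that open set. Defining $\tau_j^0:=h_j\circ\psi_j^{-1}$ on $V_j$ and $\tau_j^0:=0$ elsewhere, Lemma \ref{lem.composition} (together with the embedding $C_d^1(V_j)\subseteq C_d^\gamma(V_j)$ on the bounded set $V_j$ to cover $\gamma<1$) gives $\tau_j^0\in C_d^\gamma(V_j)$ for all $\gamma$, with uniformly bounded Taylor coefficients; moreover $\{\tau_j^0>0\}\subseteq V_j^{-\underline\delta}$ and $\tau_j^0\equiv1$ on $V_j^{-2\underline\delta}$.

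The step I expect to be the main obstacle, and presumably the source of the ``complications'' alluded to before the statement, is upgrading $\tau_j^0\in C_d^\gamma(V_j)$ to $\tau_j^0\in C_d^\gamma(\mM)$: the latter requires, for every anchor $\ba\in\mM$, a single polynomial valid for all $\bx\in\mM$, so one must control the zero extension across the chart boundary. Here the positive buffer $\eta:=\min_{\by\in\supp\tau_j^0}|\by-(\mM\setminus V_j)|_\infty>0$ (strictly positive because $\supp\tau_j^0$ is compact and lies in $\{|\cdot-(\mM\setminus V_j)|_\infty>\underline\delta\}$) does the work. Taking $P_{\ba}$ to be the chart Taylor polynomial when $\ba\in V_j$ and $P_{\ba}\equiv0$ otherwise, I would split into cases: if $\ba,\bx\in V_j$ the chart estimate applies directly, and whenever the estimate is not already trivial one of $\ba,\bx$ lies in $\supp\tau_j^0$ and the other in $\mM\setminus V_j$, forcing $|\bx-\ba|_\infty\ge\eta$; then $|\tau_j^0(\bx)-P_{\ba}(\bx)|\le K|\bx-\ba|_\infty^\gamma$ follows from $\|\tau_j^0\|_\infty\le1$, the uniform coefficient bound, boundedness of $\mM$, and $|\bx-\ba|_\infty^\gamma\ge\eta^\gamma$, after enlarging $K$. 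This yields $\tau_j^0\in C_d^\gamma(\mM)$ for every $\gamma$.

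Finally, I would normalize. Set $S:=\sum_{j=1}^r\tau_j^0$; since the $V_j^{-2\underline\delta}$ cover $\mM$ and $\tau_j^0\equiv1$ there, one has $S\ge1$ on $\mM$, and $S\in C_d^\gamma(\mM)$ as a finite sum. Because $S$ takes values in a compact subset of $(0,\infty)$, composing with a $C^\infty$ function that agrees with $t\mapsto1/t$ on that range and has bounded derivatives gives $1/S\in C_d^\gamma(\mM)$ via Lemmas \ref{lem.smoothness_implies_Hoelder} and \ref{lem.composition}; the analogous composition with multiplication (both factors having bounded range) shows that the products $\tau_j:=\tau_j^0/S$ lie in $C_d^\gamma(\mM)$ for every $\gamma$. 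These $\tau_j$ are non-negative, satisfy $\{\tau_j>0\}=\{\tau_j^0>0\}\subseteq V_j^{-\underline\delta}$, and obey $\sum_j\tau_j=S/S=1$, which are exactly the asserted properties.
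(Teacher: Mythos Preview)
Your proposal follows essentially the same strategy as the paper's proof: build smooth bumps in local coordinates, extend the $C_d^\gamma$-regularity from $V_j$ to all of $\mM$ by a case analysis exploiting a positive distance between the support and $\mM\setminus V_j$, and then normalize via Lemmas~\ref{lem.smoothness_implies_Hoelder} and~\ref{lem.composition}. Your bump construction is in fact a bit more streamlined than the paper's: you first secure a single $\underline\delta$ by compactness and then take one Urysohn bump per chart, whereas the paper places a small bump at every point, extracts a finite subcover, and only then reads off $\underline\delta$.

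There is, however, one case in your extension argument that is not covered as written. You take $P_{\ba}$ to be the chart polynomial for every $\ba\in V_j$, and then assert that in the non-trivial cases one of $\ba,\bx$ lies in $\supp\tau_j^0$ and the other in $\mM\setminus V_j$. But consider $\ba\in V_j\setminus\supp\tau_j^0$ and $\bx\in\mM\setminus V_j$: here $\tau_j^0(\bx)=0$, yet $P_{\ba}$ is the chart polynomial at a point of $V_j$, and neither point lies in $\supp\tau_j^0$, so your dichotomy does not apply and $|\bx-\ba|_\infty$ can be arbitrarily small. The fix is easy: either observe that the polynomial produced by Lemma~\ref{lem.composition} is identically zero at such $\ba$ (because all derivatives of $h_j$ vanish at $\psi_j^{-1}(\ba)$, so the outer Taylor polynomial in that construction is zero), or do what the paper does and set the approximating polynomial to zero whenever $|\ba-(\mM\setminus V_j)|_\infty\le\underline\delta/2$, which forces $|\bx-\ba|_\infty\ge\underline\delta/2$ in every remaining non-trivial case. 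Either patch closes the gap; just make the choice explicit.
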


{\bf Main idea for construction of network approximation:} The strategy is to build a deep neural network with good approximation properties combining simpler networks approximating $\psi_j,$ $f\circ \psi_j^{-1},$ and $\tau_j,$ $j=1,\ldots,r.$ Combining the single networks we can then construct a deep ReLU network mimicking the left hand side in the identity
\begin{align}
	\sum_{j=1}^K \tau_j(\bx) \big( f\circ \psi_j^{-1}\big) \circ \psi_j(\bx) = f(\bx), \quad \text{for all} \ \bx \in \mM
	\label{eq.f_split}
\end{align}
where each summand is defined as zero if $\tau_j(\bx)=0.$ The main steps of the network construction are also summarized in Figure \ref{fig.manifold}.

\begin{figure}
\begin{center}
	\includegraphics[scale=0.6]{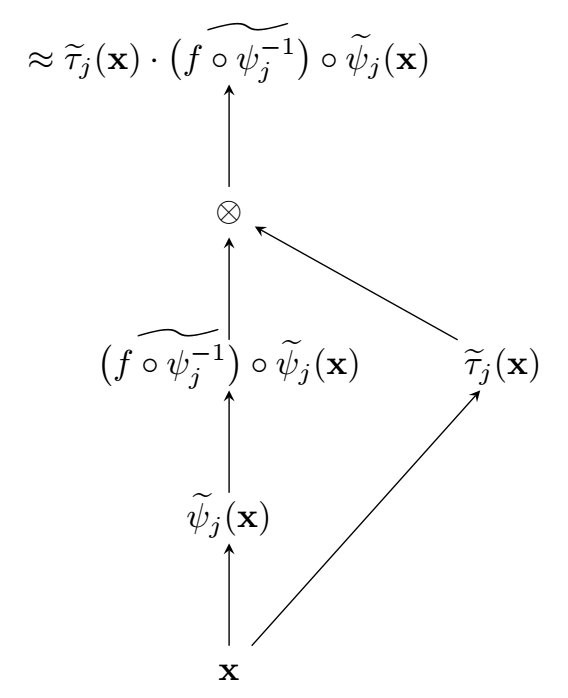} 
\end{center}
\vspace{-0.5cm}
\caption{\label{fig.manifold} Combining smaller networks to build an approximation of the functions $\widetilde \tau_j(\bx) \cdot \big( \widetilde {f\circ \psi_j^{-1}\big)} \circ \widetilde \psi_j(\bx)$. Here, $\otimes$ stands for the multiplication network introduced in Lemma \ref{lem.mult}.}
\end{figure}

{\bf Approximations of H\"older functions by deep ReLU networks:} 

\begin{thm}
\label{thm.approx_network_one_fct_neighborhood}
Suppose that $U\subset \R^d$ is a bounded set. For any function $g:U \rightarrow \R^{d'}$ in the H\"older space $g\in \mC_d^\beta(U, K)$ and any integers $m \geq 1$ and $N \geq  5^d\vee (\beta+1)^d \vee (K+1)e^d,$ there exists a network $$\widetilde g \in \mF\big(L, \big(d \sim  6d'(d+\lceil \beta\rceil)N \sim d'\big), s\big)$$ with depth $$L=9+(m+5)(1+\lceil \log_2 (d\vee \beta) \rceil)$$ and number of parameters
\begin{align*}
	s\leq 142 d' (d+\beta+1)^{3+d} N (m+6),
\end{align*}
such that with $R:= 1\vee 4\max_{\bx \in U}|\bx|_\infty,$
\begin{align*}
	\| \widetilde g - g\|_{L^\infty(U)}\leq  (2KR^\beta+1)(1+d^2+\beta^2) 6^d N2^{-m}+ K(9R)^\beta N^{-\frac{\beta}d}.
\end{align*}
\end{thm}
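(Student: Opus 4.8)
\textbf{Proof proposal for Theorem \ref{thm.approx_network_one_fct_neighborhood}.}

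The plan is to reduce the vector-valued case to the scalar case and then to follow the standard Yarotsky-style pipeline: cover the domain by a grid, approximate the target locally by its Taylor polynomial of order $\lfloor \beta \rfloor$, build a partition-of-unity localization using ReLU ``hat'' functions, and finally assemble the local polynomials through a product network. Since $g\in\mC_d^\beta(U,K)$ means each of the $d'$ component functions lies in $\mC_d^\beta(U,K)$, it suffices to construct a network for a single real-valued $\beta$-H\"older function on $U$ with the claimed depth and per-component parameter count, and then run the $d'$ copies in parallel via \eqref{eq.parallelization}; this explains the factor $d'$ appearing in both the width $6d'(d+\lceil\beta\rceil)N$ and in $s$. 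Throughout I would rescale so that the domain sits inside a box of side controlled by $R:=1\vee 4\max_{\bx\in U}|\bx|_\infty$, which is the reason the two error contributions carry powers of $R$.

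First I would fix a uniform grid of $(N^{1/d})^d\le N$ points on the rescaled cube and, to each grid point $\ba$, attach the local Taylor polynomial $P_{\ba}^{\beta}g$, whose existence and approximation quality $|g(\bx)-P_{\ba}^{\beta}g(\bx)|\le K|\bx-\ba|_\infty^\beta$ is exactly \eqref{eq.Taylor_approx}. The spacing of the grid is $O(N^{-1/d})$, so the local polynomial approximation error over one cell is $O(K(9R)^\beta N^{-\beta/d})$, which is precisely the second term in the error bound. Next I would build ReLU hat functions forming a partition of unity subordinate to the grid; each hat is a network of constant depth and $O(d)$ parameters, and the product of a hat with the corresponding local polynomial localizes that polynomial to its cell. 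The monomials of degree $<\beta$ in $P_{\ba}^{\beta}g$ are approximated by iterated multiplication networks: by the product-approximation lemma (the $\otimes$ network of Lemma \ref{lem.mult}), a product of at most $\lceil\beta\rceil$ factors each bounded by $R$ can be approximated to accuracy $2^{-m}$ using a network of depth proportional to $m+\log_2(d\vee\beta)$, which is the source of the depth formula $L=9+(m+5)(1+\lceil\log_2(d\vee\beta)\rceil)$ and of the first error term, scaling like $N2^{-m}$ (the factor $N$ because we sum over the $O(N)$ active cells and propagate the product errors, and the prefactor $(1+d^2+\beta^2)6^d$ absorbing the number of monomials, the binomial blow-up from expanding $(\bx-\ba)^{\balpha}$, and the $\|$Taylor coefficient$\|$ bound $K$). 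The crucial constraint $N\ge 5^d\vee(\beta+1)^d\vee(K+1)e^d$ guarantees the grid is fine enough and the coefficients small enough that all intermediate quantities stay bounded by one, so that the weight-bound $\max_j\|W_j\|_\infty\vee|\bv_j|_\infty\le 1$ in \eqref{eq.defi_bd_sparse_para_space} is respected.

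The main obstacle, and the step deserving the most care, is the \emph{bounded-weight} requirement: unlike in constructions that allow parameters to grow as $\eps\to0$, here every entry of every $W_j$ and every shift $\bv_j$ must have absolute value at most one. This forces a delicate encoding of the Taylor coefficients $c_{\bgamma,\ba}$ (which a priori are only bounded by $K/\bgamma!$) and of the grid shifts into products of small numbers, and it is the real reason for the polynomial-in-$(d+\beta)$ prefactors in $s$ and for the lower bound on $N$. Once the single-component network is in hand with depth $L$ and parameter count of order $(d+\beta+1)^{3+d}N(m+6)$, I would parallelize $d'$ copies via \eqref{eq.parallelization}, synchronize depths using \eqref{eq.add_layers} and the projection identity $\sigma\circ\sigma=\sigma$, and collect terms to arrive at the stated width, the bound $s\le 142 d'(d+\beta+1)^{3+d}N(m+6)$, and the two-term error estimate. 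The triangle inequality splits the total error into the Taylor term $K(9R)^\beta N^{-\beta/d}$ and the network-arithmetic term $(2KR^\beta+1)(1+d^2+\beta^2)6^d N2^{-m}$, completing the argument.
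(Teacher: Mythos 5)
Your overall pipeline (reduce to $d'=1$, parallelize via \eqref{eq.parallelization}, rescale $U$ into a small cube, then run the localized-Taylor/hat-function/multiplication-network construction) is the same skeleton as the paper's proof, which in fact does not redo the cube construction but simply invokes Theorem 5 of the cited reference for $\mC_d^\beta([0,1]^d,K)$ and then records the modifications. The affine map you allude to is exactly the paper's Part (II): $T\bx=R^{-1}\bx+(1/2,\ldots,1/2)^\top$ sends $U$ into $[1/4,3/4]^d$, turns $K$ into $R^\beta K$, and costs one extra layer and $2d$ parameters, which is where the $9$ in the depth and the $R^\beta$ factors come from.

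The genuine gap is that you never address the one issue that makes this theorem different from the known cube case: $U$ is an arbitrary bounded set, and the Taylor polynomial $P_{\ba}^{\beta}g$ exists only for $\ba\in U$. Your construction attaches $P_{\ba}^{\beta}g$ to \emph{every} point of a uniform grid on the enclosing cube and forms the partition-of-unity sum $\sum_{\bell}P_{\bx_{\bell}}^{\beta}g(\bx)\prod_j(1-M|x_j-x_j^{\bell}|)_+$; for $\bx\in U$ near $\partial U$, some of the surrounding grid points $\bx_{\bell}$ lie outside $U$, so the summand is simply undefined and the error analysis over the corresponding cells breaks down. The paper's fix is to set $P_{\bx_{\bell}}^{\beta}g:=P_{\bz^*}^{\beta}g$ for a nearest grid point $\bz^*\in U\cap\bD(M)$, which degrades the locality from $|\bx-\bx_{\bell}|_\infty\le 1/M$ to $|\bx-\bz|_\infty\le 3/M$ as in \eqref{eq.a1}; this substitution is precisely the origin of the constant $3^{2\beta}=9^\beta$ (and hence $(9R)^\beta$ after rescaling) and of the extra requirement $N\ge 5^d$, i.e.\ $M\ge 4$ so that $1/M\le 1/4$ and a grid point of $U'\cap\bD(M)$ within range is guaranteed. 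In your sketch the factor $9^\beta$ is asserted but unexplained --- grid spacing alone would only give the $3^\beta$ of the cube case --- which is a symptom of the missing step. Without some such assignment for exterior grid points (and the accompanying bound on the worsened Taylor remainder), the approximant is not well defined on all of $U$ and the claimed error bound cannot be established.
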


The upper bound for the approximation error consists of two terms. Since we have $s \asymp Nm$ parameters we expect the well-known approximation rate $(Nm)^{-\beta/r}.$ The second term coincides with this rate up to the factor $m.$ This suggests to choose $m$ small. Then, however, the first term in the bound of the approximation error will become large. The optimal trade-off is $(1+\beta/d)\log_2(N)\leq m \lesssim \log(N)$ in which case approximation error $O(N^{-\frac{\beta}d})$ can be achieved for $O(N\log N)$ non-zero network parameters. 

The proof of Theorem \ref{thm.approx_network_one_fct_neighborhood} builds on Theorem 5 in \cite{SH2017} constructing a deep ReLU network $\widetilde f$ from simpler networks that compute a linear combination of local Taylor approximations of $f.$ More precisely, $U$ is rescaled to fit into the hypercube $[0,1]^d$ and then a uniform grid on this hypercube is constructed. On each of the grid points, we construct a sub-network  approximating a $\lfloor \beta\rfloor$-th order Taylor approximation at that point, where as before, $\lfloor \beta\rfloor$ denotes the largest integer strictly smaller than $\beta.$ The function reconstruction at a specific point is then a weighted sum of the Taylor approximation at the surrounding points. There are several technical issues that occur, for instance, for a point close to the boundary, some of the surrounding grid points lie outside the rescaled version of the set $U$ and the $\lfloor \beta\rfloor$-th order Taylor approximation on these points is not well defined.

{\bf Main approximation error bound:}  Following the strategy outlined in Figure \ref{fig.manifold} to construct a network mimicking the left hand side in \eqref{eq.f_split}, we are now able to state the main result of the article.

\begin{thm}
\label{thm.main_approx}
Let $\mM\subset \R^d$ be a compact $d^*$-dimensional manifold with smooth local coordinates. Then, there exist positive constants $c, C, C',$ such that for any $0< \eta \leq 1/2,$ any $c\log(1/\eta) \leq L,$ any $p> C\eta^{-d^*/\beta}$ and any $s\geq C' L\eta^{-d^*/\beta},$ 
\begin{align*}
	\sup_{f: \R^d\rightarrow [-1,1], \ f\in \mC_d^\beta(\R^d, K)} \, \inf_{\widetilde f\in \mF(L, (d \sim p\sim  1), s)} \, \| \widetilde f - f\|_{L^\infty(\mM)}\leq \eta.
\end{align*}
\end{thm}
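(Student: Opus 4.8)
The plan is to realize a single sparse ReLU network approximating the right-hand side of the splitting identity \eqref{eq.f_split},
\[
 f(\bx)=\sum_{j=1}^r \tau_j(\bx)\,\big(f\circ\psi_j^{-1}\big)\big(\psi_j(\bx)\big),\qquad \bx\in\mM,
\]
by assembling three families of building blocks: the charts $\psi_j$, the local functions $f\circ\psi_j^{-1}$, and the partition of unity $\tau_j$ from Lemma \ref{lem.partition_of_unity}. The decisive structural observation is the dimension drop. Each $f\circ\psi_j^{-1}$ is a function on a bounded subset of $\R^{d^*}$, and, composing the $\beta$-H\"older function $f$ with the smooth inverse chart $\psi_j^{-1}$, Lemma \ref{lem.composition} gives $f\circ\psi_j^{-1}\in C_{d^*}^\beta$. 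By Definition \ref{defi.smooth_maps} the maps $\psi_j$, and by Lemma \ref{lem.partition_of_unity} the functions $\tau_j$, lie in $C_d^\gamma$ for \emph{every} $\gamma>0$, so they may be treated as arbitrarily smooth. This is exactly the gain that powers the rate $\eta^{-d^*/\beta}$ rather than $\eta^{-d/\beta}$.

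For the construction, for each chart $j$ I would feed $\bx$ into two parallel subnetworks producing $\widetilde\psi_j\approx\psi_j$ and $\widetilde\tau_j\approx\tau_j$ via Theorem \ref{thm.approx_network_one_fct_neighborhood} with a large smoothness index $\gamma$; then compose $\widetilde{f\circ\psi_j^{-1}}$ (again from Theorem \ref{thm.approx_network_one_fct_neighborhood}, now with smoothness $\beta$ in dimension $d^*$) on top of $\widetilde\psi_j$ using the composition rule \eqref{eq.composition_general}; multiply the result by $\widetilde\tau_j$ with the multiplication network of Lemma \ref{lem.mult}; and finally run the $r$ summands in parallel via \eqref{eq.parallelization}, adding them in the output layer. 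Since the three subnetworks have different depths, I would equalize them by inserting identity layers through \eqref{eq.add_layers} (legitimate because $\sigma\circ\sigma=\sigma$) and prune superfluous nodes using \eqref{eq.removal_nodes_identity}.

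The error analysis splits, for each $j$ and $\bx\in\mM$, the quantity
\[
 \big|\widetilde\tau_j(\bx)\,\widetilde{f\circ\psi_j^{-1}}(\widetilde\psi_j(\bx))-\tau_j(\bx)\,(f\circ\psi_j^{-1})(\psi_j(\bx))\big|
\]
into a multiplication error (Lemma \ref{lem.mult}), a $\tau_j$-approximation error, the network error of $\widetilde{f\circ\psi_j^{-1}}$, and the input-perturbation term $|(f\circ\psi_j^{-1})(\widetilde\psi_j(\bx))-(f\circ\psi_j^{-1})(\psi_j(\bx))|\lesssim|\widetilde\psi_j(\bx)-\psi_j(\bx)|^{\beta\wedge 1}$, controlled by the H\"older modulus of $f\circ\psi_j^{-1}$. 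Summing over $j$ and using $\sum_j\tau_j\equiv 1$ bounds the total error by a constant (depending on $\mM,K,\beta$) times $\eta$. For the budget, the dominant block is $\widetilde{f\circ\psi_j^{-1}}$: Theorem \ref{thm.approx_network_one_fct_neighborhood} in dimension $d^*$ needs $N\asymp\eta^{-d^*/\beta}$ grid points and $m\asymp\log(1/\eta)$, yielding depth $L\asymp\log(1/\eta)$, width $p\asymp\eta^{-d^*/\beta}$ and sparsity $s\asymp Nm\asymp L\eta^{-d^*/\beta}$, matching the stated constraints; the smooth blocks $\psi_j,\tau_j$ cost only $O(\eta^{-d/\gamma})$, negligible once $\gamma$ is fixed large, and the $r$ charts together with Lemma \ref{lem.mult} contribute constant factors absorbed into $c,C,C'$. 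All weights remain bounded by $1$ by construction.

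The crux — and the step I expect to be hardest — is domain control. The network $\widetilde{f\circ\psi_j^{-1}}$ is only accurate on (a slight enlargement of) $\psi_j(V_j)$, yet it is evaluated at the perturbed point $\widetilde\psi_j(\bx)$ rather than $\psi_j(\bx)$, so one must guarantee that $\widetilde\psi_j(\bx)$ stays inside the region where both the network approximation and the H\"older bound hold. This is precisely what the support margin $\underline\delta$ of Lemma \ref{lem.partition_of_unity} secures, since $\tau_j$ vanishes before $\bx$ reaches $\mM\setminus V_j$, provided $\widetilde\psi_j$ is made sufficiently accurate (cheap, as $\psi_j$ is arbitrarily smooth). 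A related technical point is that $\psi_j,\tau_j,f\circ\psi_j^{-1}$ naturally belong to the Taylor class $C$ rather than the differentiable class $\mC$ of Theorem \ref{thm.approx_network_one_fct_neighborhood}; since that theorem's proof uses only local polynomial approximability — the defining property of $C$ — it applies here, but this identification, together with the depth-synchronization bookkeeping across blocks, is where most of the labor lies.
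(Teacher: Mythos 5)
Your overall route is the one the paper takes: the same decomposition \eqref{eq.f_split}, the same three families of subnetworks built from Theorem \ref{thm.approx_network_one_fct_neighborhood} (with the charts and the partition of unity approximated at an inflated smoothness index so that their cost is dominated by the $d^*$-dimensional block), the same assembly via \eqref{eq.composition_general}, \eqref{eq.add_layers}, \eqref{eq.parallelization} and Lemma \ref{lem.mult}, and the same error split including the input-perturbation term (Lemma \ref{lem.comp_approx} in the paper). You also correctly isolate domain control as the crux and invoke the margin $\underline\delta$ together with an enlargement of $\psi_j(V_j^{-\underline\delta})$ (this is Lemma \ref{lem.V_j_embed}).

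There is, however, one step where the plan as written breaks. Your error analysis is carried out ``for each $j$ and $\bx\in\mM$'', but for $\bx$ outside $\supp\tau_j$ (in particular outside $V_j$) none of the quantities $\psi_j(\bx)$, $\widetilde\psi_j(\bx)$, $\widetilde{f\circ\psi_j^{-1}}(\widetilde\psi_j(\bx))$ is controlled: the approximation guarantees for $\widetilde\psi_j$ and $\widetilde{f\circ\psi_j^{-1}}$ hold only on $V_j$ and on a $\delta'$-enlargement of $\psi_j(V_j^{-\underline\delta})$, respectively, and elsewhere the network $\widetilde{f\circ\psi_j^{-1}}$ can output arbitrary values. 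If $\widetilde\tau_j$ is merely $\eta/(4r)$-close to $\tau_j$ rather than exactly zero there, the $j$-th summand is a product of a small number with an uncontrolled one, and the multiplication network of Lemma \ref{lem.mult} gives no guarantee once its inputs leave $[0,1]$. The paper closes this by (i) normalizing $\widetilde{f\circ\psi_j^{-1}}$ so that its output lies in $[-1,1]$ on the relevant domain, and, crucially, (ii) replacing the raw approximation $\overline\tau_j$ of $\tau_j$ by $\widetilde\tau_j=(\overline\tau_j-C_3'N_3^{-\beta/d^*})_+$, which vanishes exactly wherever $\tau_j$ does; combined with $\Mult_{m}(x,0)=0$ this makes the $j$-th summand identically zero off $V_j^{-\underline\delta}$, so the sup-norm bound only ever has to be proved on $V_j^{-\underline\delta}$. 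A second, smaller omission: $\Mult_m$ only multiplies numbers in $[0,1]$, while $f$ takes values in $[-1,1]$; the paper handles the sign by letting $\widetilde{f\circ\psi_j^{-1}}$ emit both $x$ and $-x$, applying $\sigma$, and forming $\Mult_{m^*}(x_+,z)-\Mult_{m^*}((-x)_+,z)$. Neither device is deep, but without something of this kind your sum over $j$ does not yield the claimed $L^\infty(\mM)$ bound.
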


The proof allows to exactly quantify the dependence of the constants $c, C,$ and  $C',$  on $d,d^*,\beta, K.$ This, however, leads to complicated expressions. 

\begin{rem}
\label{rem.rem}
The approach using \eqref{eq.f_split} can be easily extended if there is additional invariance in the function $f.$ To illustrate this, suppose that $\mM\subset \R^{d_1}$ is a compact $d^*$-dimensional manifold with smooth local coordinates in the sense of Definition \ref{defi.smooth_maps}. Denote the charts of $\mM$ by $(V_1,\psi_1), \ldots, (V_r, \psi_r)$ and the functions forming a partition of unity by $\tau_1,\ldots, \tau_r.$ Let $U\subset \R^{d_2}$ be a compact set containing the vector $(0,\ldots,0)^\top.$ Suppose now that we want to approximate $f:\R^d\rightarrow \R$ with $d=d_1+d_2$ on the set $\mM\times U.$ Assume moreover that $f$ is independent of the last components in the vector in the sense that $f(x_1,\ldots,x_d)=f(x_1,\ldots,x_{d_1}).$ Defining $\overline \psi_j(x_1,\ldots,x_d) :=\psi(x_1,\ldots,x_{d_1})$ and $\overline \tau_j(x_1,\ldots,x_d) :=\tau(x_1,\ldots,x_{d_1}),$ we now have 
\begin{align*}
		\sum_{j=1}^K\overline\tau_j(\bx) \big( f\circ \psi_j^{-1}\big) \circ \overline \psi_j(\bx) = f(\bx), \quad \text{for all} \ \bx \in \mM\times U.	
\end{align*}
If $f\in C_d^\beta(\R^d),$ it is clear that $O(L\eta^{-d^*/\beta})$ network parameters are needed to approximate $f$ on $\mM\times U$ up to sup-norm error $\eta.$ To obtain this rate, we only need to know $d^*$ and  $\beta$ but not what the invariance property of $f$ is. One should also observe that the Minkowski dimension of the set $\mM\times U$ is $d$ if $U$ is for instance a ball. This means that if the approach in \cite{2019arXiv190702177N} is followed without modification, $O(L\eta^{-d/\beta})$ many non-zero network parameters are needed, which can be considerably  larger if $d\gg d^*.$
\end{rem}

\section{Statistical risk bounds}
\label{sec.statistical_bound}

In this section, we convert the approximation error bounds in statistical risk estimates. Suppose we observe $n$ identically distributed pairs $(\bX_i, Y_i) \in \R^d\times \R$ with 
\begin{align}
	Y_i = f_0(\bX_i) + \eps_i, \quad i=1, \ldots, n,
	\label{eq.mod}
\end{align}
and $(\eps_i)_i$ a sequence of i.i.d. standard normal measurement errors that are independent of the design vectors $\bX_i.$ As before, the $\bX_i$ are assumed to lie on a compact $d^*$-dimensional manifold $\mM$ with smooth local coordinates. The manifold $\mM$ is unknown and we also suppose that $\mM \subseteq [0,1]^d.$ We moreover assume that $f_0\in C_d^\beta(\R^d)$ for some $\beta$ and $\|f_0\|_{L^\infty(\R^d)}\leq 1.$

In statistics, a lot of research has been devoted to approximation bounds and statistical risk bound under shape constraints on the regression function $f_0.$ But relatively little is known for constraints on the design. An exception is \cite{bickel2007}, studying locally polynomial estimators in nonparametric regression with design on an unknown manifold.

The prediction risk is the expected loss that we suffer by predicting the output for a new input vector that is generated from the same distribution as the design vectors in the training set. Thus, with $\bX \stackrel{\mathcal{D}}{=}\bX_1$ being independent of the sample $(\bX_i,Y_i)_i,$ the prediction risk is given by 
\begin{align*}
	R(\widehat f_n, f_0) := E_{f_0}\big[\big( \widehat f_n(\bX) - f_0(\bX) \big)^2\big],
\end{align*}
where $E_{f_0}$ denotes the expectation over $\bX$ and independent $(\bX_i, Y_i)_{i=1,\ldots,n}$ generated from model \eqref{eq.mod}.

We study the empirical risk estimator (ERM) over the class $\mF(L,\bp, s) \cap \{f:[0,1]^d \rightarrow [-1,1]\},$ that is, 
\begin{align}
	\widehat f_n \in \argmin_{f\in \mF(L,\bp, s) \cap \{f:[0,1]^d \rightarrow [-1,1]\}} \frac 1n \sum_{i=1}^n \big(Y_i - f(\bX_i)\big)^2.
	\label{eq.ERM}
\end{align}
To compute the ERM is extremely hard if not infeasible for non-convex function spaces such as neural networks. On the contrary, via empirical processes, theoretical guarantees can be obtained. Therefore, we restrict ourselves here to the ERM analysis and refer to \cite{SH2017} for an extension. By Lemma 4, Lemma 5 in \cite{SH2017} and \eqref{eq.removal_nodes_identity}, we have that the prediction risk of the empirical risk minimizer is bounded by
\begin{align}
	R( \widehat f_n, f_0)
	&\leq C \Big[ \inf_{f\in \mF} E\big[\big(f(\bX) - f_0(\bX) \big)^2 \big] 
	+ \frac{ (s+1) \log ( 4n (L+1) (s+1)^L (d+1)) + 1}{n} \Big],
	\label{eq.prediction_error_bound}
\end{align}
with $C$ a universal constant, $\mF= \mF(L,\bp, s) \cap \{f:[0,1]^d \rightarrow [-1,1]\}$ and $E$ the expectation taken over $\bX.$ Inequalities of this form are also called oracle inequalities as the statistical risk of the estimator is bounded by the best risk of any element in the class plus some extra term that penalizes the model complexity. 
The next result is now a straightforward consequence of the abstract risk bound for the empirical risk minimizer in \eqref{eq.prediction_error_bound} and Theorem \ref{thm.main_approx}.

\begin{thm}
\label{thm.main}
Consider model \eqref{eq.mod} with design on a $d^*$-dimensional compact manifold $\mM\subset [0,1]^d$ and regression function $f: \R^d \rightarrow [-1,1]$ in $C_d^\beta(\R^d,K).$ Let $\widehat f_n$ be the ERM in \eqref{eq.ERM}. Then, there exist constants $c, Q>0$ that are independent of $n,$ such that if 
\begin{compactitem}
\item[(i)] $c\log n \leq L,$
\item[(iii)] $n^{d^*/(2\beta+d^*)} \lesssim p,$
\item[(iv)] $s \asymp  L n^{d^*/(2\beta+d^*)},$
\end{compactitem}
then,
\begin{align}
	R(\widehat f_n, f_0) \leq Q n^{-\frac{2\beta}{2\beta+d^*}} L^2 \log(n).
	\label{eq.main1}
\end{align}
\end{thm}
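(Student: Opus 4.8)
The plan is to combine the oracle inequality \eqref{eq.prediction_error_bound} with the manifold approximation bound of Theorem \ref{thm.main_approx}, and then optimize over the single free parameter $\eta$ that governs the approximation/complexity trade-off. I would take $\eta = \eta_n \asymp n^{-\beta/(2\beta+d^*)}$, so that $\eta_n^{-d^*/\beta} \asymp n^{d^*/(2\beta+d^*)}$; this is exactly the order prescribed for $p$ and $s$ in conditions (iii) and (iv), and it is the value that makes the squared approximation error $\eta_n^2 \asymp n^{-2\beta/(2\beta+d^*)}$ balance the stochastic term.

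First I would check that this $\eta_n$ satisfies the hypotheses of Theorem \ref{thm.main_approx}. Indeed $c\log(1/\eta_n) = \tfrac{c\beta}{2\beta+d^*}\log n \lesssim \log n \lesssim L$ by (i), $p \gtrsim n^{d^*/(2\beta+d^*)} \asymp \eta_n^{-d^*/\beta}$ by (iii), and $s \asymp L n^{d^*/(2\beta+d^*)} \asymp L\eta_n^{-d^*/\beta}$ by (iv), so after fixing the constant in (iv) the theorem yields a network $\widetilde f \in \mF(L,(d\sim p\sim 1),s)$ with $\|\widetilde f - f_0\|_{L^\infty(\mM)} \le \eta_n$. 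To place the approximant inside the constrained class $\{f:[0,1]^d\to[-1,1]\}$ used by the ERM, I would post-compose $\widetilde f$ with the clipping map $x\mapsto \min(1,\max(-1,x))$; this is $1$-Lipschitz and fixes $[-1,1]$, so since $f_0\in[-1,1]$ it can only decrease the sup-norm error, and it is realized by a constant-size ReLU block absorbed into the depth and sparsity budgets via \eqref{eq.add_layers} and \eqref{eq.parallelization} (applying the approximation theorem with depth $L-O(1)$). Because $\bX\in\mM$ almost surely, this gives $\inf_{f\in\mF} E[(f(\bX)-f_0(\bX))^2] \le \|\widetilde f - f_0\|_{L^\infty(\mM)}^2 \le \eta_n^2 \asymp n^{-2\beta/(2\beta+d^*)}$.

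Next I would bound the complexity term of \eqref{eq.prediction_error_bound}. The dominant part of $\log\bigl(4n(L+1)(s+1)^L(d+1)\bigr)$ is $L\log(s+1)$, and since $s\asymp L n^{d^*/(2\beta+d^*)}$ one has $\log(s+1)\lesssim \log n$ in the relevant regime $\log L \lesssim \log n$, so the whole logarithm is $\lesssim L\log n$. Multiplying by $(s+1)/n$ then gives
\begin{align*}
\frac{(s+1)\,\log\bigl(4n(L+1)(s+1)^L(d+1)\bigr)+1}{n} \lesssim \frac{L n^{d^*/(2\beta+d^*)}\cdot L\log n}{n} = L^2\log n\cdot n^{-\frac{2\beta}{2\beta+d^*}}.
\end{align*}

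Plugging the two bounds into \eqref{eq.prediction_error_bound} yields $R(\widehat f_n,f_0) \lesssim n^{-2\beta/(2\beta+d^*)} + L^2\log n\, n^{-2\beta/(2\beta+d^*)} \lesssim L^2\log n\, n^{-2\beta/(2\beta+d^*)}$, which is \eqref{eq.main1}; the constant $Q$ inherits the dependence of the constants in Theorem \ref{thm.main_approx} and \eqref{eq.prediction_error_bound} on $d,d^*,\beta,K$, but not on $n$. Since Theorem \ref{thm.main_approx} already carries the conceptual weight, I expect no deep difficulty here: the only genuine care is the bookkeeping needed to \emph{(a)} fit the approximant into the bounded, $s$-sparse class without inflating $L$ or $s$ beyond their prescribed orders, and \emph{(b)} justify $\log(s+1)\lesssim\log n$ so that the stochastic term reads cleanly as $L^2\log n$ times the rate. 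These are the steps I would write out most carefully.
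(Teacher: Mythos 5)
Your proof follows essentially the same route as the paper's: choose $\eta_n$ so that the squared approximation error from Theorem \ref{thm.main_approx} balances the stochastic term in the oracle inequality \eqref{eq.prediction_error_bound}, and you merely spell out the bookkeeping (the clipping to $[-1,1]$ and the bound $\log(s+1)\lesssim\log n$) that the paper leaves implicit. In fact your choice $\eta_n \asymp n^{-\beta/(2\beta+d^*)}$ is the one consistent with conditions (iii)--(iv), whereas the paper's stated $\eta = R\,n^{-2\beta/(2\beta+d^*)}$ appears to be a typo, since it would force $\eta^{-d^*/\beta}\asymp n^{2d^*/(2\beta+d^*)}$ and hence a larger $p$ and $s$ than the theorem assumes.
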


In particular, $L$ can be chosen of order $\log(n)$ such that the risk is bounded by $$ \text{constant}\times n^{-\frac{2\beta}{2\beta+d^*}} \log^3(n).$$

\begin{proof}
Choose $\eta=R n^{-\frac{2\beta}{2\beta+d^*}}$ for a sufficiently large constant $R,$ such that, if $C$ and $C'$ are as in Theorem \ref{thm.main_approx}, $p> C\eta^{-d^*/\beta}$ and $s\geq C' L\eta^{-d^*/\beta}.$ The result follows from Theorem \ref{thm.main_approx} and \eqref{eq.prediction_error_bound}. 
\end{proof}

\section{Proofs}

\begin{proof}[Proof of Lemma \ref{lem.composition}]
It is enough to consider the case that $g: \R^q \rightarrow \R.$ By definition of $f \in C_r^\beta(D),$ there exists an $q$-dimensional vector $P_{\ba}^{\beta} f$ containing polynomials of degree $\lfloor \beta \rfloor,$ and a positive $K,$ such that  $| f(\bx) - P_{\ba}^{\beta} f(\bx) |_\infty \leq K |\bx-\ba|_\infty^{\beta}, \ \forall \bx, \ba \in D.$ The coefficients of $P_{\ba}^{\beta} f$ are uniformly bounded over $\ba\in D.$ In the same way $g\in C_q^\beta(\R^q)$ implies existence of a polynomial  $Q_{\bu}^{\beta} g$ with uniformly bounded coefficients over $\bu \in \R^q$ approximating $g(\by)$ up to an error $K |\by-\bu|_\infty^{\beta}$ for all $\by, \bu \in \R^q.$

The composition $R_{\ba}^\beta(\bx):=Q_{f(\ba)}^{\beta} g(P_{\ba}^{\beta} f(\bx))$ is a polynomial in $\ba-\bx$ of degree $2\lfloor \beta \rfloor.$ Denote by $\widetilde R_{\ba}^\beta(\bx)$ the polynomial obtained by removing in $R_{\ba}^\beta(\bx)$ all terms $(\bx-\ba)^{\bgamma}$ with degree $|\bgamma| \geq \beta.$ Consequently, $\widetilde R_{\ba}^\beta(\bx)$ is a polynomial of degree $\lfloor \beta \rfloor.$ 

Th coefficient of $\widetilde R_{\ba}^\beta(\bx)$ are uniformly bounded over $\ba.$ It remains to show that $\widetilde R_{\ba}^\beta(\bx)$ approximates $f\circ g$ up to an error constant$\times |\bx-\ba|_\infty^{\beta}.$ Since $D$ is a bounded domain, we conclude that $|\widetilde R_{\ba}^\beta(\bx) - R_{\ba}^\beta(\bx)|\leq K' |\bx-\ba|_\infty^{\beta}$ for some finite constant $K'.$ We must have that $B:=\cup_{\ba} P_{\ba}^{\beta} f(D) \cup f(D)$ is a bounded subset of $\R^q.$ Moreover, since $Q_{f(\ba)}^{\beta} g$ is a polynomial, it is Lipschitz on $B$ and a Lipschitz constant can be chosen independent of $\ba \in D.$ Similarly, one can show that for sufficiently large $K'',$ $\max_{\ba \in D}|P_{\ba}^{\beta} f(\bx)-P_{\ba}^{\beta} f(\by)|\leq K''|\bx-\by|_\infty$ for all $\bx, \by\in D.$ Because of $\beta \geq 1$ also $f$ is Lipschitz. Thus, there exists a constant $C$ such that
\begin{align*}
	\big| R_{\ba}^\beta(\bx) - g\circ f (\bx) \big|
	&\leq  \big| Q_{f(\ba)}^{\beta} g(P_{\ba}^{\beta} f(\bx)) - Q_{f(\ba)}^{\beta} g(f(\bx)) \big|
	+ \big |  Q_{f(\ba)}^{\beta} g(f(\bx)) -  g\circ f (\bx) \big| \\
	&\leq C \big| P_{\ba}^{\beta} f(\bx) - f(\bx)\big|_\infty 
	+ C \big| f(\ba) - f(\bx) \big|_\infty^\beta \\
	&\leq C K |\ba - \bx|_\infty^\beta + C|\ba-\bx|_\infty^\beta,
\end{align*}
completing the proof for (i).
\end{proof}

\begin{proof}[Proof of Lemma \ref{lem.partition_of_unity}]
Since $\mM\setminus V_j$ is a closed set, we have that $\delta(\bx):=|\bx-(\mM\setminus V_j)|_\infty := \inf_{\by \in \mM\setminus V_j} |\bx-\by|_\infty >0$ implying that $B_{\delta(\bx)/2}^\infty(\bx) \cap \mM \subset V_j$ where $B_{\eps}^\infty(\bx) = \{\by :|\by-\bx|_\infty \leq \eps\}$ denotes the $|\cdot|_\infty$-norm ball around $\bx$ with radius $\eps.$

Thus, for any $j=1,\ldots,r$ and any $\bx \in V_j$ it is possible to construct a smooth and non-negative function $\lambda_{\bx}:\R^{d^*} \rightarrow [0,\infty)$ such that $\psi_j(\bx) \in \{\by \in \R^{d^*}: \lambda_{\bx}(\by)>0\}\subseteq \psi_j(B_{\delta(\bx)/2}^\infty(\bx)\cap \mM)$ and for any $\beta>0,$ $\lambda_{\bx} \in C_{d^*}^\beta(\R^{d^*}).$ By construction
\begin{align*}
	\bigcup_{j=1, \ldots,r} \, \bigcup_{\bx\in V_j}\, \psi_j^{-1}\Big(\big\{\by \in \R^{d^*}: \lambda_{\bx}(\by)>0\big\}\Big) =\mM.
\end{align*}
This is a union over open sets and since $\mM$ is compact, we can select points $\bx_\ell \in \psi_{s(\ell)}(V_{s(\ell)})$ for $\ell=1,\ldots,m$ and some finite $m,$ where $s:\{1,\ldots,m\}\rightarrow \{1,\ldots, r\}$ maps the points $\bx_\ell$ to the corresponding charts, such that 
\begin{align*}
	\bigcup_{\ell=1, \ldots,m} \, \psi_{s(\ell)}^{-1}\Big(\big\{\by \in \R^{d^*}: \lambda_{\bx_\ell}(\by)>0\big\}\Big) =\mM.
\end{align*}
For $j=1, \ldots, r,$ define $\nu_j = \sum_{\ell \in s^{-1}(j)} \lambda_{\bx_\ell}.$ Then, $\cup_{j=1,\ldots,r} \psi_j^{-1}(\{\bx \in \R^{d^*}: \nu_j(\bx)>0\}) =\mM.$ In a next step, define $\sigma_j :\mM \rightarrow [0,\infty)$ via
\begin{align*}
\sigma_j(\bx) = 
\begin{cases}
\nu_j \circ \psi_j (\bx), &\text{if} \ \bx\in V_j, \\
0 &\text{otherwise}.
\end{cases}
\end{align*}

By definition of the maps $\lambda_{\bx},$ we have that 
\begin{align*}
	\inf \big\{ |\bu-\bv|_\infty : \bu \in \{\by:\sigma_j(\by)>0\}, \bv\in \mM\setminus V_j \big \} \geq \min_{\ell=1,\ldots, m} \frac{\delta(\bx_{\ell})}{2} =: \underline{\delta} >0.
\end{align*}
By Lemma \ref{lem.composition}, we have that for any $\beta \geq 1,$ $\sigma_j \in C_d^\beta(V_j).$ We now show that this can be extended such that for any $\beta \geq 1,$ $\sigma_j \in C_d^\beta(\mM).$ The property $\sigma_j \in C_d^\beta(V_j)$ ensures existence of a polynomial $P_{\ba}^\beta\sigma_j$ of degree $\lfloor \beta\rfloor$ with bounded coefficients where the bound is independent of $\ba.$
Choose 
\begin{align*}
Q_{\ba}^\beta\sigma_j 
:=
\begin{cases}
P_{\ba}^\beta\sigma_j, &\text{if} \ |\ba- (\mM\setminus V_j)|_\infty > \underline{\delta}/2, \\
0  &\text{otherwise}.
\end{cases}
\end{align*}
Obviously, also for the polynomials $Q_{\ba}^\beta\sigma_j,$ all coefficients can be uniformly bounded over $\ba.$

To show that $|Q_{\ba}^\beta\sigma_j(\bx)-\sigma_j(\bx)|/|\ba -\bx|_\infty^\beta$ is bounded, we have to consider several cases. Assume first that $\bx \in V_j$ and $|\ba- (\mM\setminus V_j)|_\infty > \underline{\delta}/2.$ Since $\sigma_j \in C_d^\beta(V_j),$ $|Q_{\ba}^\beta\sigma_j(\bx)-\sigma_j(\bx)|/|\ba -\bx|_\infty^\beta$ is bounded. If $\bx \in \mM\setminus V_j,$ and  $|\ba- (\mM\setminus V_j)|_\infty > \underline{\delta}/2,$ then, $\underline{\delta}/2\leq |\bx-\ba|_\infty,$ and
\begin{align*}
	\big| Q_{\ba}^\beta\sigma_j(\bx)-\sigma_j(\bx) \big|
	&= \big| Q_{\ba}^\beta\sigma_j(\bx)\big|
	\leq \sup_{\bu \in V_j} \big\|Q_{\bu}^\beta\sigma_j \big\|_{L^\infty(\mM)}
	\Big(\frac{2}{\underline{\delta}}\Big)^{\beta} |\bx-\ba|_\infty^\beta.
\end{align*}
If now $|\ba- (\mM\setminus V_j)|_\infty \leq \underline{\delta}/2$ and $|\bx- (\mM\setminus V_j)|_\infty<\underline\delta,$ then, $|Q_{\ba}^\beta\sigma_j(\bx)-\sigma_j(\bx)|=|0-0|=0.$ Finally for the case $|\ba- (\mM\setminus V_j)|_\infty \leq \underline{\delta}/2$ and $|\bx- (\mM\setminus V_j)|_\infty \geq \underline\delta,$ it follows that $|\bx-\ba|\geq \underline \delta/2$ and $|Q_{\ba}^\beta\sigma_j(\bx)-\sigma_j(\bx)|\leq \|\sigma_j\|_{L^\infty(\mM)}\leq \|\sigma_j\|_{L^\infty(\mM)} (2/\underline{\delta})^{\beta} |\bx-\ba|_\infty^\beta.$ This shows that there exists a constant $K'$ such that for all $\ba, \bx\in \mM,$ $|Q_{\ba}^\beta\sigma_j(\bx)-\sigma_j(\bx)|\leq K' |\ba -\bx|_\infty^\beta,$ implying that also $\sigma_j \in C_d^\beta(\mM).$

For $\bx \in V_j,$ we have $\sigma_j(\bx) = \nu_j \circ \psi_j(\bx)$ and therefore $\sigma_j(\bx)>0$ if $\bx \in \psi_j^{-1}( \{\by \in \R^{d^*}: \nu_j(\by)>0\}).$ Hence, $\sum_{j=1}^r \sigma_j(\bx)>0$ for all $\bx\in \mM.$ Since $\bx \mapsto \sum_{j=1}^r \sigma_j(\bx)$ is continuous and $\mM$ is compact, also $0<\underline{\sigma}:=\inf_{\bx\in \mM}\sum_{j=1}^r \sigma_j(\bx)\leq \sup_{\bx\in \mM}\sum_{j=1}^r \sigma_j(\bx) =: \overline{\sigma} < \infty.$

Choose now $G:\R^2\rightarrow \R$ such that $G(u,v)=u/v$ for all $\underline{\sigma}\leq u,v\leq \overline{\sigma},$ $G$ vanishes outside a bounded set and all partial derivatives of $G$ exist. This can be achieved for instance by choosing a smooth function $K$ with $K(x)=0$ for all $x\leq 1/4$ and $K(x)=1$ for all $x\geq 3/4$ and defining $G(u,v):=K(u \underline{\sigma})K(v \underline{\sigma})K(\overline \sigma +1 - u)K(\overline \sigma +1 - v)  u/v.$ By Lemma \ref{lem.smoothness_implies_Hoelder}, it then follows that $G\in C_2^\beta(\R^2)$ for all $\beta>0.$ Since $(\sigma_j, \sum_{\ell=1}^r \sigma_\ell) \in C_d^\beta(\mM),$ we can conclude by Lemma \ref{lem.composition} that $\tau_j :=G(\sigma_j, \sum_{\ell=1}^r \sigma_\ell) \in C_d^\beta(\mM).$ This completes the proof.
\end{proof}

\begin{proof}[Proof of Theorem \ref{thm.approx_network_one_fct_neighborhood}]
Using the parallelization property \eqref{eq.parallelization}, it is enough to show the result for $d'=1.$ In this case, the statement is a modification of Theorem 5 in \cite{SH2017}. The theorem states that if for any function $f\in \mC_d^\beta([0,1]^d, K)$ and any integers $m \geq 1$ and $N \geq  (\beta+1)^d \vee (K+1)e^d,$ there exists a network $\widetilde f \in \mF\big(L ,\big(d \sim  6(d+\lceil \beta\rceil)N \sim 1\big), s\big)$ with depth $$L=8+(m+5)(1+\lceil \log_2 (d\vee \beta) \rceil)$$ and number of parameters $s\leq 141 (d+\beta+1)^{3+d} N (m+6),$ such that 
\begin{align*}
	\| \widetilde f - f\|_{L^\infty([0,1]^d)}\leq  (2K+1)(1+d^2+\beta^2) 6^d N2^{-m}+ K3^\beta N^{-\frac{\beta}d}.
\end{align*}
The remaining proof is split into two parts. In part $(I),$ we show that if $U\subset [1/4,3/4]^d,$ then, for any function $g\in C_d^\beta(U, K)$ and any integers $m \geq 1$ and $N \geq  5^d\vee (\beta+1)^d \vee (K+1)e^d,$ there exists a network $\widetilde g \in \mF\big(L, \big(d \sim 6(d+\lceil \beta\rceil)N \sim 1\big), s\big)$ with depth $L=8+(m+5)(1+\lceil \log_2 (d\vee \beta) \rceil)$ and number of parameters $s\leq 141 (d+\beta+1)^{3+d} N (m+6),$ such that 
\begin{align*}
	\| \widetilde g - g\|_{L^\infty(U)}\leq  (2K+1)(1+d^2+\beta^2) 6^d N2^{-m}+ K3^{2\beta} N^{-\frac{\beta}d}.
\end{align*}
In part $(II)$ of the proof, we discuss the general case.

{\it Part (I):} As the proof follows from a modification of the proof for Theorem in \cite{SH2017}, we only describe the differences using the notation in that article. The strategy in that paper is to define the set of grid points $\bD(M) :=\{\bx_{\bell} = (\ell_j/M)_{j=1,\ldots,r} : \bell =(\ell_1,\ldots, \ell_r) \in \{0,1,\ldots, M\}^r\}$ and to build a network that approximates the $\lfloor \beta\rfloor$-th order Taylor polynomial on each of this grid points. Denote by $P_{\bx_{\bell}}^\beta g (\bx)$ the $\lfloor \beta\rfloor$-th order Taylor polynomial around $\bx_\ell.$ Recall that $g$ is a H\"older function defined on $U\subset [1/4,3/4]^d.$ Thus, $P_{\bx_{\bell}}^\beta g (\bx)$ only exists if $\bx_{\bell} \in U.$ If $\bx_{\ell}\in [0,1]^d\setminus U,$ there exists a (not necessarily unique) grid point $\bz^* \in \argmin_{\bz \in U\cap \bD(M)} |\bx_\ell - \bz|_\infty.$ We then set $P_{\bx_{\bell}}^\beta g (\bx):= P_{\bz^*}^\beta g (\bx)$ and define 
\begin{align*}
	P^\beta g (\bx) := \sum_{\bx_{\bell} \in \bD(M)} P_{\bx_{\bell}}^\beta g (\bx) \prod_{j=1}^r (1- M |x_j-x^{\bell}_j|)_+.
\end{align*}
By adapting Lemma B.1  in \cite{SH2017}, we have for $1/M \leq 1/4$ and for any $\bx\in U,$
\begin{align}
	\big | P^\beta g (\bx) - g(\bx) \big| 
	&\leq \max_{\bx_{\bell} \in \bD(M): |\bx- \bx_{\bell}|_\infty \leq 1/M} \big| P_{\bx_{\bell}}^\beta g (\bx) -  g(\bx) \big| \notag \\
	&\leq \max_{\bz \in U: |\bx- \bz|_\infty \leq 3/M} \big| P_{\bz}^\beta g (\bx) -  g(\bx) \big| \label{eq.a1} \\
	&\leq K3^\beta M^{-\beta}. \notag
\end{align}
In Theorem 5 of \cite{SH2017}, we can modify the network $Q_1$ such that for $\bx_\ell \in \bD(M),$ (36) still holds, that is,
\begin{align*}
		\Big | Q_1(\bx) - \Big( \frac{P_{\bx_{\bell}}^\beta g (\bx)}{B} + \frac 12\Big)_{\bx_{\bell} \in \bD(M)} \Big|_\infty \leq \beta^2 2^{-m}.
\end{align*}
One should notice that all network parameters can be chosen to be bounded in absolute value by one and the construction does not require to enlarge the network architecture. To conclude the proof, one has to apply \eqref{eq.a1} which means that $K$ in the original bound has to be replaced by $K3^\beta$ in the step where Lemma B.1 is applied. Together with $(M+1)^d \leq N$ and $M\geq 4,$ the additional requirement $N\geq  5^d$ occurs. 

{\it Part (II):} Introduce the affine transformation $T: \R^d \rightarrow \R^d,$ $T\bx = R^{-1}\bx +(1/2, \ldots, 1/2)^\top.$ Define $U'=T(U)$ and observe that $U' \subseteq [1/4,3/4]^d.$ It is straightforward to see that if $g\in \mC_d^\beta(U, K),$ then, $h := g(T^{-1} \cdot ) \in \mC_d^\beta(U',R^\beta K).$ We can now apply the result from the first part, with $U$ replaced by $U'$ and $K$ replaced by $R^\beta K.$ This shows that for any integers $m \geq 1$ and $N \geq  5^d\vee (\beta+1)^d \vee (K+1)e^d,$ there exists a network $\widetilde h \in \mF\big(L, \big(d\sim  6(d+\lceil \beta\rceil)N\sim 1\big), s\big)$ with depth $L=8+(m+5)(1+\lceil \log_2 (d\vee \beta) \rceil)$ and number of parameters $s\leq 141 (d+\beta+1)^{3+d} N (m+6),$ such that 
\begin{align*}
	\| \widetilde h - h\|_{L^\infty(U')}\leq  (2R^\beta K+1)(1+d^2+\beta^2) 6^d N2^{-m}+ R^\beta  K3^{2\beta} N^{-\frac{\beta}d}.
\end{align*}
We now define the neural network $\widetilde g = \widetilde h \circ \sigma(T \cdot).$ Since $T$ is an affine transformation, this can be realized by adding one hidden layer to the network architecture of $\widetilde h.$ It also adds $2d$ non-zero parameters. Since $T(U) \subseteq [0,1]^d,$ we have $\widetilde g(\bx)= \widetilde h(T\bx)$ for all $\bx\in U.$ Together this shows that for any integers $m \geq 1$ and $N \geq  5^d\vee (\beta+1)^d \vee (K+1)e^d,$ there exists a network $\widetilde h \in \mF\big(L, \big(d\sim 6(d+\lceil \beta\rceil)N \sim 1\big), s\big)$ with depth $L=9+(m+5)(1+\lceil \log_2 (d\vee \beta) \rceil)$ and number of parameters $s\leq 142 (d+\beta+1)^{3+d} N (m+6),$ such that 
\begin{align*}
	\| \widetilde g - g\|_{L^\infty(U)} 
	&\leq \| \widetilde h - h\|_{L^\infty(U')}
	\leq  (2R^\beta K+1)(1+d^2+\beta^2) 6^d N2^{-m}+ R^\beta  K3^{2\beta} N^{-\frac{\beta}d}.
\end{align*}
\end{proof}

\begin{lem}[Lemma (A.1) in \cite{SH2017}]
\label{lem.mult}
For any positive integer $m,$ there exists a network $\Mult_m \in \mF(m+4,(2\sim 6\sim 1)),$ such that $\Mult_m(x,y) \in [0,1],$
\begin{align*}
	\big|\Mult_m (x,y) - x y \big| \leq 2^{-m}, \quad \text{for all} \ x,y \in [0,1],
\end{align*}
and $\Mult_m(0,y)=\Mult(x,0) =0.$
\end{lem}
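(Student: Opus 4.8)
The plan is to reduce the approximation of the product $xy$ to the approximation of the univariate square function via a polarization identity, and then to build a ReLU network for the square function out of iterated ``tooth'' maps. Concretely, for $x,y\in[0,1]$ one has
\begin{align*}
	xy = 2\Big[\Big(\tfrac{x+y}2\Big)^2 - \Big(\tfrac x2\Big)^2 - \Big(\tfrac y2\Big)^2\Big],
\end{align*}
and all three arguments $\tfrac{x+y}2,\tfrac x2,\tfrac y2$ lie in $[0,1].$ Hence it suffices to approximate $u\mapsto u^2$ on $[0,1]$ by a network $\widehat{Sq}_m$ and to take the above linear combination of three parallel copies of $\widehat{Sq}_m$ fed by the affine preprocessing $(x,y)\mapsto(\tfrac{x+y}2,\tfrac x2,\tfrac y2).$

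For the square function I would use the classical construction based on the tent map $T(u)=2\sigma(u)-4\sigma(u-\tfrac12)$, whose $s$-fold self-composition $T_s=T\circ\cdots\circ T$ is a sawtooth with $2^{s-1}$ teeth mapping $[0,1]$ into $[0,1].$ The key identity is that the piecewise-linear interpolant $f_m(u):=u-\sum_{s=1}^m T_s(u)/4^{s}$ of $u^2$ at the dyadic grid $\{k/2^m\}$ satisfies $\|f_m-u^2\|_{L^\infty([0,1])}\le 4^{-m-1}$. Since each self-composition of $T$ costs one hidden layer and the partial sums can be accumulated in a fixed number of extra units, $f_m$ is realized by a network of depth $O(m)$ and constant width. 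Because $T(0)=0$ we get $T_s(0)=0$, hence $\widehat{Sq}_m(0)=f_m(0)=0$ \emph{exactly}, which is precisely what will force the boundary conditions below.

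Assembling the three copies in parallel via \eqref{eq.parallelization} and composing with the affine input map and a final linear combination via \eqref{eq.composition_general} (using $\sigma\circ\sigma=\sigma$ to route inputs through layers), one obtains a network approximating $xy$ to accuracy $O(4^{-m})\le2^{-m}$ for $m\ge1$. The range condition $\Mult_m(x,y)\in[0,1]$ is enforced by composing with the clipping map $t\mapsto\sigma(t)-\sigma(t-1)$, whose weights lie in $\{-1,1\}$. The exact boundary conditions follow from $\widehat{Sq}_m(0)=0$: when $y=0$ the first and second arguments coincide and the third equals $\widehat{Sq}_m(0)=0$, so the linear combination collapses to $0$, and symmetrically when $x=0$. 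The main obstacle is the bookkeeping under the constraint that every weight be bounded by one in absolute value: the raw tent map carries weights $2,4$ and the telescoping sum the coefficients $1/4^{s}$, so these factors must be redistributed across layers (splitting each multiplication by $2$ or $4$ into unit-weight additions and pushing the small coefficients $1/4^s$ through successive activations) so as to keep $\max_j\|W_j\|_\infty\vee|\bv_j|_\infty\le1$ while still attaining the stated depth $m+4$ and width $6.$ This delicate weight-accounting is exactly the content of Lemma A.1 in \cite{SH2017}, whose construction I would adapt verbatim.
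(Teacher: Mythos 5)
The paper itself contains no proof of this lemma --- it is imported verbatim as Lemma~A.1 of \cite{SH2017} --- and your sketch follows exactly the route of that cited proof: Telgarsky--Yarotsky sawtooth approximation of the square function, a polarization identity, a clipping layer, and exact cancellation at the boundary. The skeleton is sound (the identity, the bound $\|f_m-u^2\|_{L^\infty([0,1])}\le 4^{-m-1}$, the total error $6\cdot 4^{-m-1}\le 2^{-m}$ for $m\ge 1$, and the argument that $\widehat{Sq}_m(0)=0$ forces $\Mult_m(x,0)=\Mult_m(0,y)=0$ are all correct). However, the two issues you flag and then defer are precisely what separates your sketch from a proof of the statement \emph{as written}, i.e.\ membership in $\mF(m+4,(2\sim 6\sim 1))$ with all weights in $[-1,1]$. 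First, the three-term polarization $xy=2[((x+y)/2)^2-(x/2)^2-(y/2)^2]$ requires three parallel copies of the square network; each copy must carry, per hidden layer, two units to produce the next tooth and one unit for the running partial sum $u-\sum_{s\le k}T_s(u)/4^s$, so you need width about $9$. The cited construction instead uses the two-term identity $xy=((x+y)/2)^2-(|x-y|/2)^2$ with $|x-y|=\sigma(x-y)+\sigma(y-x)\in[0,1]$, which needs only two copies and fits the width-$6$ budget; the boundary conditions follow by the same cancellation you describe, since at $y=0$ both arguments equal $x/2$. Second, ``splitting'' the weight $4$ in $T(u)=2\sigma(u)-4\sigma(u-\tfrac12)$ into unit-weight additions forces you to duplicate the source unit four times per layer, inflating the width further rather than fixing it. The clean resolution in \cite{SH2017} is to rescale the teeth from the outset, using $T^k(u)=\sigma(u)/2-\sigma\big(u-2^{1-2k}\big)$ mapping $[0,2^{2-2k}]$ onto $[0,2^{-2k}]$, so that the factors $4^{-k}$ are absorbed into the maps themselves and every weight stays in $[-1,1]$ with no duplication. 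With those two substitutions your argument becomes the cited proof; without them it proves the lemma only with a larger width and, strictly speaking, relies circularly on the very lemma for the weight accounting.
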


\begin{lem}
\label{lem.V_j_embed}
If $\psi_j^{-1} \in C_{d^*}^1(\psi_j(V_j)), \psi_j \in C_d^1(V_j),$ then, for any $\delta>0,$ there exists a  $\delta'>0,$ such that
\begin{align*}
	\big(\psi_j(V_j^{-\delta}) \big)^{\delta'} := \big\{\by \in \R^{d^*}:\big|\by -\psi_j(V_j^{-\delta}) \big |_\infty \leq \delta'\big\}  \subseteq \psi_j(V_j), \quad \text{for all} \ j=1, \ldots, r. 
\end{align*}
\end{lem}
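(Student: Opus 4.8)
The plan is to run a compactness argument in $\R^{d^*}$ chart by chart and then take a minimum over the finitely many charts. First I would fix $j$ and show that $V_j^{-\delta}$ is a compact subset of $\mM$. Since $\mM\setminus V_j$ is closed, any point at $|\cdot|_\infty$-distance at least $\delta>0$ from it already lies in $V_j$, so in fact $V_j^{-\delta}=\{\by\in\mM:|\by-(\mM\setminus V_j)|_\infty\geq\delta\}$; as $\by\mapsto |\by-(\mM\setminus V_j)|_\infty$ is continuous, $V_j^{-\delta}$ is a closed subset of the compact set $\mM$ and hence compact. Because $\psi_j\in C_d^1(V_j)$ is in particular continuous, the image $\psi_j(V_j^{-\delta})$ is a compact subset of $\R^{d^*}$.

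Next I would use that $\psi_j(V_j)$ is an \emph{open} subset of $\R^{d^*}$. This is where the hypothesis $\psi_j^{-1}\in C_{d^*}^1(\psi_j(V_j))$ enters: combined with continuity of $\psi_j$ it makes $\psi_j$ a homeomorphism of the open manifold neighborhood $V_j$ onto $\psi_j(V_j)$, so by invariance of domain $\psi_j(V_j)$ is open in $\R^{d^*}$ (equivalently, $\psi_j(V_j)$ is just the open parameter domain of the chart). Thus $\psi_j(V_j^{-\delta})$ is a compact set contained in the open set $\psi_j(V_j)$.

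The remaining step is the standard fact that a compact set stays at positive distance from a disjoint closed set. If $\R^{d^*}\setminus\psi_j(V_j)=\emptyset$ the conclusion holds trivially for any $\delta'>0$; otherwise set $\rho_j:=\inf\{|\by-\bz|_\infty:\by\in\psi_j(V_j^{-\delta}),\ \bz\in\R^{d^*}\setminus\psi_j(V_j)\}$, which is strictly positive because $\psi_j(V_j^{-\delta})$ is compact, $\R^{d^*}\setminus\psi_j(V_j)$ is closed, and the two sets are disjoint. Taking any $\delta'_j<\rho_j$ then forces every $\by$ with $|\by-\psi_j(V_j^{-\delta})|_\infty\leq\delta'_j$ to avoid $\R^{d^*}\setminus\psi_j(V_j)$, i.e. $\by\in\psi_j(V_j)$, so $(\psi_j(V_j^{-\delta}))^{\delta'_j}\subseteq\psi_j(V_j)$. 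Finally, since there are only finitely many charts, I would set $\delta':=\min_{j=1,\ldots,r}\delta'_j>0$, which works simultaneously for all $j$.

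I expect the only genuine subtlety to be the openness of $\psi_j(V_j)$: with merely Lipschitz maps one cannot avoid a topological input, and it is precisely invariance of domain (applied to the homeomorphism $\psi_j$ on the $d^*$-dimensional chart) that rules out $\psi_j(V_j^{-\delta})$ accumulating at the boundary of the parameter domain. Everything else is routine compactness in $\R^{d^*}$.
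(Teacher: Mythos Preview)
Your argument is correct and in fact cleaner than the paper's. The paper does not use the compact-inside-open distance argument; instead it exploits the hypotheses quantitatively. From $\psi_j^{-1}\in C_{d^*}^1(\psi_j(V_j))$ it extracts a Lipschitz constant $L$ for $\psi_j^{-1}$, which yields the lower bound $|\psi_j(\bx)-\psi_j(\by)|_\infty\geq L^{-1}|\bx-\by|_\infty$. This forces every point of $\psi_j(V_j)\setminus\psi_j(V_j^{-\delta/2})$ to lie at $|\cdot|_\infty$-distance at least $R:=\delta/(2L)$ from $\psi_j(V_j^{-\delta})$, and a short connectedness argument along line segments then rules out any $\bw\notin\psi_j(V_j)$ within $R/2$ of $\psi_j(V_j^{-\delta})$, giving the explicit value $\delta'=\delta/(4L)$.

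The trade-off is clear: the paper's route produces a concrete $\delta'$ in terms of the bi-Lipschitz constant, which fits the quantitative spirit of the surrounding approximation bounds, whereas your compactness route is shorter and avoids the line-segment step (which, incidentally, still needs openness of $\psi_j(V_j)$ to go through, so both proofs rely on that fact). Your observation that openness is really the crux is apt; the simplest justification is not invariance of domain but the definition of a chart, as you note parenthetically.
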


\begin{proof}
Recall that if $f\in C_r^1(U)$ with $U$ bounded, then $f$ is Lipschitz. Since $V_j\subset \mM$ and $\mM$ is compact, $V_j$ is a bounded set. Together with $\psi_j \in C_d^1(V_j),$ this shows that $\psi_j(V_j)$ is a bounded set and therefore also $\psi_j^{-1}$ is Lipschitz on $\psi_j(V_j).$ Thus, there exists a constant $L$ such that $|\psi_j^{-1}(\bu)-\psi_j^{-1}(\bv)|_\infty \leq L|\bu - \bv|_\infty.$ This also implies that $|\psi_j(\bx)-\psi_j(\by)|_\infty \geq L^{-1} |\bx-\by|_\infty$ for all $\bx, \by \in V_j.$

For any $\bu \in \psi_j(V_j)\setminus \psi_j(V_j^{-\delta/2} )$ and $\bv \in \psi_j(V_j^{-\delta})$ we have that $|\bv -\bu|_\infty\geq \delta/(2L)=:R.$ Suppose that there exist points $\bw \notin \psi_j(V_j)$ and $\bv \in \psi_j(V_j^{-\delta})$ with $|\bw - \bv|_\infty \leq R/2 .$ The set of points on the line $[0,1] \ni  t \mapsto t\bw +(1-t)\bv$ intersected with $\psi_j(V_j)\setminus \psi_j(V_j^{-\delta/2})$ cannot be empty and each such element must have a smaller $|\cdot |_\infty$-norm than $R/2$ which is a contradiction. This shows that $\bw \in  \psi_j(V_j)$ and yields the result for $\delta' =R/2>0.$
\end{proof}

\begin{lem}
\label{lem.comp_approx}
Let $K, \eps >0,$ and assume that $h_0,\widetilde h_0: U \subset \R^p \rightarrow V\subset \R^q$ such that $\|h_0-\widetilde h_0\|_{L^\infty(U)}\leq \eps.$ Let $V^\eps:=\{\bx \in \R^q:|\bx-V|_\infty\leq \eps\}.$ If $\widetilde h_1 :V^\eps \rightarrow [-K,K]$ and $h_1 \in \mC_{d^*}^\beta(V^\eps ,K),$ then,
\begin{align*}
	\big\| h_1 \circ h_0 - \widetilde h_1 \circ \widetilde h_0 \big\|_{L^\infty(U)}
	\leq K \|h_0-\widetilde h_0\|_{L^\infty(U)}^{\beta \wedge 1}
	+ \big\|h_1 - \widetilde h_1 \big\|_{L^\infty(V^\eps)}.
\end{align*}
\end{lem}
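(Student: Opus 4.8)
The plan is a standard error-decomposition via the triangle inequality, where the only real care is needed in checking that all compositions are well-defined and in extracting a $(\beta\wedge 1)$-Hölder modulus for $h_1$ with the sharp constant $K$.

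First I would fix an arbitrary $\bx \in U$ and verify that both $h_0(\bx)$ and $\widetilde h_0(\bx)$ lie in the enlarged set $V^\eps,$ so that every evaluation of $h_1$ and $\widetilde h_1$ below makes sense. Indeed $h_0(\bx) \in V \subseteq V^\eps$ by definition, while $|\widetilde h_0(\bx) - h_0(\bx)|_\infty \le \|h_0 - \widetilde h_0\|_{L^\infty(U)} \le \eps$ together with $h_0(\bx) \in V$ forces $\widetilde h_0(\bx) \in V^\eps.$ Next I would insert the intermediate term $h_1(\widetilde h_0(\bx))$ and apply the triangle inequality:
\begin{align*}
	\big| h_1(h_0(\bx)) - \widetilde h_1(\widetilde h_0(\bx)) \big|
	\le \big| h_1(h_0(\bx)) - h_1(\widetilde h_0(\bx)) \big|
	+ \big| h_1(\widetilde h_0(\bx)) - \widetilde h_1(\widetilde h_0(\bx)) \big|.
\end{align*}
The second term is at most $\|h_1 - \widetilde h_1\|_{L^\infty(V^\eps)}$ precisely because $\widetilde h_0(\bx) \in V^\eps.$

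For the first term I would show that $h_1$ is $(\beta \wedge 1)$-Hölder with constant $K,$ so that it is bounded by $K\,|h_0(\bx) - \widetilde h_0(\bx)|_\infty^{\beta\wedge 1} \le K\,\|h_0 - \widetilde h_0\|_{L^\infty(U)}^{\beta\wedge 1}.$ When $\beta \le 1$ this bound with constant $K$ is immediate from the definition of $\mC_{d^*}^\beta(V^\eps, K),$ since the Hölder seminorm of $h_1$ itself is then among the terms bounded by $K.$ When $\beta > 1$ I would instead observe that the straight segment joining $h_0(\bx)$ to $\widetilde h_0(\bx)$ stays inside $V^\eps$—every point on it is within $\eps$ in $|\cdot|_\infty$ of $h_0(\bx)\in V$—so the mean value theorem applies along this segment, and since the first-order partials of $h_1$ have sup-norms summing to at most $K,$ the Lipschitz bound $|h_1(\bu) - h_1(\bv)| \le K\,|\bu-\bv|_\infty$ follows. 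For vector-valued $h_1$ one runs this argument componentwise, the output $|\cdot|_\infty$ absorbing it.

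Taking the supremum over $\bx \in U$ then combines the two estimates into the claimed inequality. The one subtlety worth flagging is the case $\beta > 1$: one must not invoke convexity of $V^\eps$ (which need not hold), but rather exploit that the specific short segment between the two nearby points $h_0(\bx)$ and $\widetilde h_0(\bx)$ already lies in $V^\eps,$ which is exactly what the $\eps$-enlargement buys us. Everything else is routine.
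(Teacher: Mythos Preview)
Your proposal is correct and follows exactly the same approach as the paper: insert the intermediate term $h_1 \circ \widetilde h_0$, apply the triangle inequality, and use the H\"older regularity of $h_1$ for the first piece. The paper's own proof is a two-line sketch that omits the details you supply (well-definedness on $V^\eps$, the segment-in-$V^\eps$ argument for $\beta>1$), so your write-up is a faithful and more careful expansion of it.
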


\begin{proof}
The inequality follows from $\| h_1 \circ h_0 - \widetilde h_1 \circ \widetilde h_0 \|_{L^\infty(U)}\leq \| h_1 \circ h_0 - h_1 \circ \widetilde h_0 \big\|_{L^\infty(U)}+\| h_1 \circ \widetilde h_0 - \widetilde h_1 \circ \widetilde h_0 \big\|_{L^\infty(U)}$ and $h_1 \in \mC_{d^*}^\beta(V^\eps,K).$ 
\end{proof}

\begin{proof}[Proof of Theorem \ref{thm.main_approx}]
We use the same notation as before and denote by $(V_1,\psi_1), \ldots(V_r,\psi_r)$ the charts. Since $\psi_j \in C_d^1(V_j),$ $\psi_j$ is Lipschitz and $\psi_j(V_j)$ is a bounded set. As we can always add a vector to the local coordinate map $\psi_j$ without changing the properties, we can (and will) assume that $\psi_j(V_j)\subset [1,\infty)^{d^*}.$

By Lemma \ref{lem.partition_of_unity}, there exist $\underline \delta>0$ and non-negative functions $\tau_j: \mM \rightarrow \R,$ $j=1, \ldots,r,$ such that for any $\gamma>0,$ and any  $\bx \in \mM,$ $\{\by\in \mM: \tau_j(\by)>0\}\subseteq V_j^{-\underline{\delta}},$  $\tau_j \in C_d^\gamma(\mM),$ and $\sum_{j=1}^r \tau_j(\bx) =1.$

We first show how to build networks approximating the coordinate maps $\psi_j,$ the functions $f\circ \psi_j^{-1}$ and the functions $\tau_j.$ We then merge these networks into a bigger network imitating the left hand side in \eqref{eq.f_split}, see also the schematic representation of the construction in Figure \ref{fig.manifold}.

By Definition \ref{defi.smooth_maps}, $\psi_j \in C_d^{\gamma_1}(V_j),$ where $\gamma_1 := (\beta \vee 1)d/d^*.$ To construct a neural network approximating $\psi_j$ on $V_j,$ we apply Theorem \ref{thm.approx_network_one_fct_neighborhood}. There exist positive constants $C_1:= C_1(d, d^*,\beta,  j, V_j, \delta')$ and $C_1':= C_1'(d, d^*,\beta,  j, V_j),$ such that for any integers $N_1 \geq  C_1, m_1 \geq ((\beta\vee 1)/d^*+1)\log_2 N_1,$ and $p_1^{\max}\geq 6d^*(d+\lceil \gamma_1\rceil)N_1,$  there exists a network $$\widetilde \psi_j \in \mF\big(L_1, (d\sim p_1^{\max} \sim d^*\big), s_1\big)$$ with depth $L_1=9+(m_1+5)(1+\lceil \log_2 (d\vee \gamma_1) \rceil)$ and number of parameters $s_1\leq 142 d^* (d+\gamma_1+1)^{3+d} N_1 (m_1+6),$ satisfying
\begin{align*}
	\| \widetilde \psi_j - \psi_j\|_{L^\infty(V_j)}\leq  C_1' N_1^{-\frac{\beta \vee 1}{d^*}} < \frac{\delta'}4.
\end{align*}
Given $0<\eta\leq 1/2,$  set $N_1 =  \lceil C_1 \vee (C_1')^{\frac{d^*}{\beta \vee 1}} (4r/\eta)^{\frac{d^*}{\beta}} \rceil.$ Then, there exist positive constants $K_1,K_1',K_1''$ that do not depend on $\eta,$ such that for any $L_1 \geq K_1 \log(1/\eta),$ any $p_1^{\max}\geq  K_1'\eta^{-d^*/\beta}$ and any $s_1\geq K_1'' L_1\eta^{-d^*/\beta},$ 
\begin{align}
	\inf_{\widetilde \psi_j\in \mF(L_1, (d\sim p_1^{\max}\sim d^*), s_1)} \, \| \widetilde \psi_j - \psi_j\|_{L^\infty(V_j)}\leq \Big(\frac{\eta}{4r}\Big)^{\frac{1}{\beta \wedge 1}} \wedge \frac{\delta'}4.
	\label{eq.NW1}
\end{align}

In the next step we construct a network approximating $f \circ \psi_j^{-1}.$ By Lemma \ref{lem.composition} (using that $\psi_j(V_j)$ is bounded), we have that $f \circ \psi_j^{-1} \in C_{d^*}^\beta(\psi_j(V_j)).$ Combined with Lemma \ref{lem.V_j_embed},  this also shows that there exists $\delta'>0,$ such that $f \circ \psi_j^{-1} \in C_{d^*}^\beta(\psi_j(V_j^{-\underline \delta})^{\delta'}).$ Using Theorem \ref{thm.approx_network_one_fct_neighborhood}, there exist constants $C_2:=C_2(d^*,\beta, \delta', \psi_j(V_j^{-\delta}), K')$ and $C_2':=C_2'(d^*,\beta, \psi_j(V_j^{-\delta})^{\delta'}, K'),$ such that for any integers $N_2 \geq C_2, m_2 \geq  (\beta/d^*+1)\log_2 N_2,$ and $p_2^{\max}\geq 6(d^*+\lceil \beta\rceil)N_2,$ there exists a network $$\overline{f \circ \psi_j^{-1}}  \in \mF\big(L_2, \big(d^*\sim  p_2^{\max} \sim 1\big), s_2\big)$$ with depth $L_2=9+(m_2+5)(1+\lceil \log_2 (d^* \vee \beta) \rceil)$ and number of parameters $s_2\leq 142 (d^*+\beta+1)^{3+{d^*}} N_2 (m_2+6),$ such that 
\begin{align*}
	\| \overline{f \circ \psi_j^{-1}} - f \circ \psi_j^{-1} \|_{L^\infty(\psi_j(V_j^{-\delta} )^{\delta'})}\leq  C_2' N_2^{-\frac{\beta}{d^*}}.
\end{align*}
From $\overline{f \circ \psi_j^{-1}}$ we construct now a deep ReLU network $\widetilde{f \circ \psi_j^{-1}}$ with two output units computing $\overline{f \circ \psi_j^{-1}}/(1+C_2' N^{-\frac{\beta}{d^*}})$ and $-\overline{f \circ \psi_j^{-1}}/(1+C_2' N^{-\frac{\beta}{d^*}}).$ This network is then in the class $\mF\big(L_2, \big(d^*\sim p_2^{\max} \sim 2\big), 2s_2\big).$ Since $f$ maps to $[-1,1],$ the network output of $\widetilde{f \circ \psi_j^{-1}}$ is in $[-1,1]^2.$ Also the output will approximate $(f \circ \psi_j^{-1},-f \circ \psi_j^{-1})$ up to an error $2C_2' N^{-\frac{\beta}{d^*}}.$

We can argue as above to find network architectures that lead to approximation error $\eta/4r.$  Set $N_2 =  \lceil C_2 \vee (8rC_2'/\eta)^{\frac{d^*}{\beta}} \rceil.$ Then, there exist positive constants $K_2,K_2',K_2''$ that do not depend on $\eta,$ such that for any $L_2 \geq K_2 \log(1/\eta),$ any $p_2^{\max}\geq  K_2'\eta^{-d^*/\beta}$ and any $s_2\geq K_2'' L_2\eta^{-d^*/\beta},$ 
\begin{align}
	\inf_{\widetilde{f \circ \psi_j^{-1}} \in \mF(L_2, (d^*\sim p_2^{\max}\sim 2), 2s_2)} \, \Big\| \widetilde{f \circ \psi_j^{-1}} - (f \circ \psi_j^{-1},-f \circ \psi_j^{-1}) \Big\|_{L^\infty(\psi_j(V_j^{\delta/2} \cap \mM)^{\delta'/2})} \leq \frac{\eta}{4r}.
	\label{eq.NW2}
\end{align}

Now, we build a deep network approximating $\tau_j \in \mC_d^{\beta d/d^*}(\mM, K_2)$ on $\mM.$ For that, we again apply Theorem \ref{thm.approx_network_one_fct_neighborhood}.  Write $\gamma_3:=\beta d/d^*.$ This shows existence of positive constants $C_3:= C_3(d, d^*,\beta, \mM)$ and $C_3':= C_3'(d, d^*,\beta, \mM),$ such that for any integers $N_3 \geq  C_3, m_3 \geq (\beta/d^*+1)\log_2 N_3,$ and $p_3^{\max}\geq 6 (d+\lceil \gamma_3\rceil)N_3,$  there is a network $$\overline \tau_j \in \mF\big(L_3, (d\sim p_3^{\max}\sim 1\big), s_3\big)$$ with depth $L_3=9+(m_3+5)(1+\lceil \log_2 (d\vee \gamma_3) \rceil)$ and number of parameters $s_1\leq 142 d^* (d+\gamma_3+1)^{3+d} N_3 (m_3+6),$ satisfying
\begin{align*}
	\| \overline \tau_j - \tau_j\|_{L^\infty(\mM)}\leq  C_3' N_3^{-\frac{\beta}{d^*}} <1.
\end{align*}
By adding one layer and two non-zero network parameters, we can also compute the network function $\widetilde \tau_j=(\overline \tau_j-C_3' N_3^{-\frac{\beta}{d^*}})_+.$ This means that $\widetilde \tau_j \in \mF\big(L_3+1, (d\sim p_3^{\max}\sim 1), s_3+2\big)$ and
\begin{align}
	\| \widetilde \tau_j - \tau_j\|_{L^\infty(\mM)}\leq  2C_3' N_3^{-\frac{\beta}{d^*}}.
	\label{eq.tau_j_bd}
\end{align}
Moreover, on $\mM,$ we have the property that the output of $\widetilde \tau_j$ is in $[0,1]$ and that the support of $\bx\mapsto \widetilde \tau_j(\bx)$ is contained in the support of $\tau_j.$

Set $N_3 = \lceil C_3 \vee (8rC_3'/\eta)^{\frac{d^*}{\beta}} \rceil.$ Then, there exist positive constants $K_3,K_3',K_3''$ that do not depend on $\eta,$ such that for any $L_3 \geq K_3 \log(1/\eta),$ any $p_3^{\max}\geq  K_3'\eta^{-d^*/\beta}$ and any $s_3\geq K_3'' L_3\eta^{-d^*/\beta},$ 
\begin{align}
	\inf_{\widetilde \tau_j \in \mF(L_3+1, (d\sim p_3^{\max}\sim 1), s_3+2)} \, \| \widetilde \tau_j - \tau_j \|_{L^\infty(\mM)}\leq \frac{\eta}{4r}.
	\label{eq.NW3}
\end{align}

In a next step, we combine the individual networks constructed so far in order to approximate $\tau_j (f\circ \psi_j^{-1}) \circ \psi_j$ for any $j=1, \ldots, r.$ 

First, we use the composition property \eqref{eq.composition_general}. Recall that $\psi_j(V_j) \subset [1,\infty)^{d^*}.$ We obtain that for any $L_{12} \geq (K_1+K_2)\log(1/\eta),$ any $p_{12}^{\max} \geq (K_1' \vee K_2') \eta^{-d^*/\beta}$ and any $s_{12}\geq 2(K_1''\vee K_2'') L_{12}\eta^{-d^*/\beta},$ there exists a network $\widetilde{f \circ \psi_j^{-1}} \circ \widetilde \psi_j = \widetilde{f \circ \psi_j^{-1}} \circ \sigma(\widetilde \psi_j) \in \mF(L_{12}+1, (d\sim p_{12}^{\max}\sim 2), s_{12})$ such that both \eqref{eq.NW1} and \eqref{eq.NW2} hold.  Using Lemma \ref{lem.comp_approx} with $\eps =\delta'/2$ together with \eqref{eq.NW1} and $\|\widetilde{f\circ \psi_j^{-1}}\|_{L^\infty(\psi_j(V_j^{-\delta} )^{\delta'})} \leq 1$ for the first inequality and \eqref{eq.NW1} and \eqref{eq.NW2} for the second inequality gives
\begin{align}
	&\Big \| \big(\widetilde{f\circ \psi_j^{-1}}\big) \circ \widetilde{\psi_j}
	- (f, -f)\Big\|_{L^\infty(V_j^{-\delta})} \notag \\
	&\leq 
	\big \| \widetilde{\psi_j}
	- \psi_j \big\|_{L^\infty(V_j)}^{\beta \wedge 1}
	+\big \|\widetilde{f\circ \psi_j^{-1}}
	- \big(f\circ \psi_j^{-1}, - f\circ \psi_j^{-1} \big)\big\|_{L^\infty((V_j^{-\delta})^{\delta'})} \label{eq.NW12}\\
	&\leq \frac{\eta}{2r}. \notag 
\end{align}

In a next step, we synchronize the depth using \eqref{eq.add_layers}. Thus, there exists a deep ReLU network $E_j$ with three outputs computing $(\widetilde{f \circ \psi_j^{-1}} \circ \widetilde \psi_j, - \widetilde{f \circ \psi_j^{-1}} \circ \widetilde \psi_j, \widetilde \tau_j )$ and 
\begin{align*}
	E_j \in \mF\Big( 1+L_{12}\vee  L_3, (d\sim p_{123}^{\max} \sim 3), s_{12}+s_3+2+ d(L_{12}\vee  L_3) \Big),
\end{align*}
with $p_{123}^{\max}:=p_3^{\max}+p_{12}^{\max}.$

For any positive integer $m,$ there exists by Lemma \ref{lem.mult} a network $\Mult_m \in \mF(m+4,(2\sim 6\sim 1)),$ such that $\Mult_m(x,y) \in [0,1],$ $|\Mult_m (x,y) - x y \big| \leq 2^{-m},$ for all $x,y \in [0,1],$ and $\Mult_m(0,y)=\Mult(x,0) =0.$ We can therefore also construct a neural network $\Mult^* \in \mF(\lceil \log_2(r/\eta)\rceil+6,(3\sim 12\sim 2)),$ that takes input $(x,y,z)$ and outputs $(\Mult_{m^*}(x,z),\Mult_{m^*}(y,z))$ with $m^*= \lceil \log_2(r/\eta)\rceil+2.$ In particular, 
\begin{align}
	\big|\Mult_{m^*}\big(x,z\big)-\Mult_{m^*}\big((-x)_+,z\big ) - xz\big| \leq \frac{\eta}{4r}
	\label{eq.mult_does_mult}
\end{align}
and $\Mult^*(x,y,0)=(0,0).$

The composed network $M_j:=\Mult^* \circ \sigma(E_j)$ therefore computes approximately $(\tau_j\cdot ((f\circ \psi_j^{-1}) \circ \psi_j)_+, \tau_j \cdot (- (f\circ \psi_j^{-1}) \circ \psi_j)_+ = (\tau_j  \cdot (f)_+,\tau_j \cdot (-f)_+).$ Using the parallelization rule, we can now build $r$ networks in parallel computing $(M_1, \ldots, M_r).$ The $2r$ outputs of this network are by construction of $\Mult^*$ non-negative. Denote the two outputs of $M_j$ by $M_{j1}$ and $M_{j2}.$ By adding one layer computing a weighted sum of all the outputs, we have constructed the network
 $$\widetilde{f}:=\sum_{\ell=1}^2 \sum_{j=1}^r (-1)^{\ell+1}\sigma(M_{j\ell})=\sum_{\ell=1}^2 \sum_{j=1}^r (-1)^{\ell+1} M_{j\ell}.$$ 
Moreover, there exist positive constants $c,C,C',$ such that for any $L\geq c\log(1/\eta),$ any $p\geq C\eta^{-d^*/\beta}$ and any $s\geq C' L \eta^{-d^*/\beta},$ $\widetilde{f} \in \mF(L,(d\sim p \sim 1),s).$

It remains to bound the approximation error of the network $\widetilde f.$ For the estimate, we use in the first step that due to Lemma \ref{lem.partition_of_unity} and the construction of $\widetilde \tau_j$, for any $j,$ $\tau_j$ and $\widetilde \tau_j$ vanish outside the set $V_j^{-\delta}$ and $\Mult^*(x,y,0)=(0,0).$ The second inequality follows from \eqref{eq.mult_does_mult}. For the third inequality, recall that $\widetilde \tau_j \leq 1$ and $\|f\|_{L^\infty(\R^d)}\leq 1.$ Together with \eqref{eq.NW3} and \eqref{eq.NW12}, this yields 
\begin{align*}
	\big\| \widetilde{f} - f\big\|_{L^\infty(\mM)}
	&\leq \sum_{j=1}^r \Big \|  (M_{j1}-M_{j2})
	- \tau_j f\Big\|_{L^\infty(V_j^{-\delta})} \\
	&\leq \frac{\eta}{4} + \sum_{j=1}^r \Big \| \widetilde \tau_j \big(\widetilde{f\circ \psi_j^{-1}}\big) \circ \widetilde{\psi_j}
	- \tau_j f\Big\|_{L^\infty(V_j^{-\delta})} \\
	&\leq \frac{\eta}{4} + \sum_{j=1}^r \big \| \widetilde \tau_j
	- \tau_j \big\|_{L^\infty(V_j)} 
	+\Big \| \big(\widetilde{f\circ \psi_j^{-1}}\big) \circ \widetilde{\psi_j}
	- f\Big\|_{L^\infty(V_j^{-\delta})} \\
	&\leq \eta
\end{align*}
completing the proof.
\end{proof}

\bibliographystyle{acm}       
\bibliography{bibDL}           

\end{document}